\theoremstyle{plain}
\newtheorem{theorem}{Theorem}
\newtheorem{proposition}[theorem]{Proposition}
\newtheorem{lemma}[theorem]{Lemma}
\newtheorem{example}{Example}
\theoremstyle{definition}
\newtheorem{definition}{Definition}
\newtheorem{assumption}[definition]{Assumption}
\theoremstyle{remark}
\newcommand{\field}[1]{\mathbb{#1}}
\newcommand{\E}{\field{E}}
\newcommand{\inner}[1]{ \left\langle {#1} \right\rangle }
\DeclareMathOperator{\diag}{diag}
\newcommand{\Xcal}{\mathcal{X}}
\newcommand{\Acal}{\mathcal{A}}
\newcommand{\Bcal}{\mathcal{B}}
\newcommand{\Dcal}{\mathcal{D}}
\newcommand{\Fcal}{\mathcal{F}}
\newcommand{\Gcal}{\mathcal{G}}
\newcommand{\Hcal}{\mathcal{H}}
\newcommand{\Lcal}{\mathcal{L}}
\newcommand{\Ocal}{\mathcal{O}}
\newcommand{\Scal}{{\mathcal{S}}}
\newcommand{\Vcal}{\mathcal{V}}
\newcommand{\Wcal}{\mathcal{W}}
\newcommand{\EE}{\mathbb{E}} 
\newcommand{\RR}{\mathbb{R}} 
\newcommand{\bra}[1]{\left[#1\right]}
\newcommand{\pa}[1]{\left(#1\right)}
\def\Var{\mathrm{Var}}
\newcommand{\epsV}{\epsilon_{\Vcal}}
\newcommand{\epsW}{\epsilon_{\Wcal}}
\newcommand{\lspan}{\mathrm{sp}}
\newcommand{\Dini}{\Dcal} 
\newcommand{\bfb}{\mathbf{b}}
\newcommand{\bfu}{\mathbf{u}}
\DeclareMathOperator*{\argmin}{argmin}
\DeclareMathOperator*{\argmax}{argmax}
\newcommand{\Prr}{\operatorname{\Pr}}
\newcommand{\xihat}{\widehat{\xi}}
\newcommand{\hatxi}{\widehat{\xi}}
\newcommand{\epsstat}{\varepsilon_{\textrm{stat}}}
\newcommand{\pie}{\pi_e}
\newcommand{\pib}{\pi_b}
\newcommand{\de}{d^{\pie}}
\newcommand{\db}{d^{\pib}}
\newcommand{\MH}{M_{\Hcal, h}}
\newcommand{\MF}{M_{\Fcal, h}}
\newcommand{\Vs}{V_{\Scal}^{\pie}}
\newcommand{\Vsh}{V_{\Scal,h}^{\pie}}
\newcommand{\Vf}{V_{\Fcal}}
\newcommand{\Vfh}{V_{\Fcal, h}}
\newcommand{\BES}{\Bcal^{\Scal}}
\newcommand{\BEH}{\Bcal^{\Hcal}}
\newcommand{\Vhat}{\widehat{V}}
\newcommand{\sigmin}{\sigma_{\min}}
\newcommand{\SigF}{\Sigma_{\Fcal, h}}
\newcommand{\SigFR}{\Sigma_{\Fcal, h}^R}
\newcommand{\CFV}{C_{\Fcal, V}}
\newcommand{\CFU}{C_{\Fcal, U}}
\newcommand{\CF}{C_{\Fcal, 2}}
\newcommand{\CH}{C_{\Hcal, 2}}
\newcommand{\CFi}{C_{\Fcal, \infty}}
\newcommand{\CHi}{C_{\Hcal, \infty}}
\newcommand{\SigH}{\Sigma_{\Hcal, h}}
\newcommand{\bone}{\mathbf{1}}
\newcommand{\bfv}{\mathbf{v}}
\newcommand{\bbu}{\bar{\bfu}}
\newcommand{\Vtilde}{\widetilde V}
\renewcommand{\epsV}{\epsilon_{\Vcal}}
\title{On the Curses of Future and History in  Future-dependent Value Functions for OPE}
\author{%
  Yuheng Zhang \\
  University of Illinois Urbana-Champaign \\
  \texttt{yuhengz2@illinois.edu} \\
  \And
   Nan Jiang \\
  University of Illinois Urbana-Champaign \\
  \texttt{nanjiang@illinois.edu} \\
}
\begin{document}

\maketitle

\begin{abstract}
We study off-policy evaluation (OPE) in partially observable environments with complex observations, with the goal of developing estimators whose guarantee avoids exponential dependence on the horizon. While such estimators exist for MDPs and POMDPs can be converted to history-based MDPs, their estimation errors depend on the state-density ratio for MDPs which becomes history ratios after  conversion, an exponential object. Recently,  \citet{uehara2022future} proposed \emph{future-dependent value functions} as a promising framework to address this issue, where the guarantee for memoryless policies depends on the density ratio over the \textit{latent} state space. However, it also depends on the boundedness of the future-dependent value function and other related quantities, which we show could be exponential-in-length and thus erasing the advantage of the method. In this paper, we discover novel coverage assumptions tailored to the structure of POMDPs, such as \textit{outcome coverage} and \textit{belief coverage}, which enable polynomial bounds on the aforementioned quantities. As a side product, our analyses also lead to the discovery of new algorithms with complementary properties. 
\end{abstract}

\section{Introduction and Related Works}
\label{sec:intro}
Off-policy evaluation (OPE) is the problem of estimating the return of a new \textit{evaluation policy} $\pi_e$ based on historical data, which is typically collected using a different policy $\pi_b$ (the \textit{behavior policy}). OPE plays a central role in the pipeline of offline reinforcement learning (RL), but is also notoriously difficult. Among the major approaches, importance sampling (IS) and its variants \cite{precup2000eligibility,jiang2016doubly} provide  unbiased and/or asymptotically correct estimation using the \textit{cumulative importance weights}: given a trajectory of observations and actions
$
o_1, a_1,  o_2, a_2,  \ldots, o_H, a_H,
$ 
the cumulative importance weight is
$\prod_{h=1}^H \frac{\pi_e(a_h|o_h)}{\pi_b(a_h|o_h)}$,  whose variance grows \textit{exponentially} with the horizon, unless $\pi_b$ and $\pi_e$ are very close in their action distributions. In the language of offline RL theory \citep{chen2019information,xie2020batch,yin2021towards}, the boundedness of the cumulative importance weights is the coverage assumption required by   IS,  a very stringent one.

Alternatively, algorithms such as Fitted-Q Evaluation \citep[FQE;][]{ernst2005tree,munos2008finite,le2019batch} and Marginalized Importance Sampling \citep[MIS;][]{liu2018breaking,xie2019towards, nachum2019dualdice,uehara2019minimax} enjoy more favorable coverage assumptions, at the cost of function-approximation biases. Instead of requiring bounded cumulative importance weights, FQE and MIS only require that of the \textit{state-density ratios}, which can be substantially smaller \citep{chen2019information,xie2020batch}. That is, when the environment satisfies the Markov assumption (namely $o_h$ is a \textit{state}), the guarantees of FQE and MIS only depend on the range of $\de_{h}(o_h)/\db_{h}(o_h)$, where $\de_{h}$ and $\db_{h}$ is the marginal distribution of $o_h$ under $\pi_e$ and $\pi_b$, respectively. 

In this paper, we study the non-Markov setting, which is ubiquitous in real-world applications. Such environments are typically modeled as Partially Observable Markov Decision Processes (POMDPs) \cite{kaelbling1998planning}. Despite the more general formulation, one can reduce a POMDP to an MDP, making algorithms for MDPs applicable: we can simply define an equivalent MDP, with its state being the \textit{history} of the original POMDP,
$(o_1, a_1, \ldots, o_h)$. 
Unfortunately, a close inspection reveals the problem: the state-density ratio after conversion is
$
\frac{\de_{h}(o_1, a_1, \ldots, o_h)}{\db_{h}(o_1, a_1, \ldots, o_h)} = \prod_{h'=1}^{h-1} \frac{\pi_e(a_{h'}|o_{h'})}{\pi_b(a_{h'}|o_{h'})},
$ 
which is exactly the cumulative importance weights in IS and thus also an exponential object!

To address this issue, \citet{uehara2022future} recently proposed a promising framework called \textit{future-dependent value functions} (or FDVF for short). Notably, their coverage assumption is the boundedness of density ratios between $\pie$ and $\pib$ over the \textit{latent state} for memoryless policies. This is as if we were dealing directly with the latent MDP underlying the POMDP, a perhaps best possible scenario. Nevertheless, 
{\it 
have we achieved exponential-free OPE in POMDPs?}

The answer to this question turns out to be nontrivial. In addition to the latent-state coverage parameter, the guarantee in \citet{uehara2022future} also depends on other quantities that are less interpretable. Among them, the boundedness of FDVF itself---a concept central to this framework---is unclear, and we show that a natural   construction yields an upper bound that still scales with the cumulative importance weights, thus possibly erasing the superiority of the framework over IS or MDP-reduction. 

In this work, we address these caveats by proposing novel coverage assumptions tailored to the structure of POMDPs, under which fully polynomial estimation guarantees can be established. More concretely, our contributions are:
\begin{enumerate}[leftmargin=*]
\item For FDVFs, we show that a novel coverage concept called \textit{\textbf{outcome coverage}} is sufficient for guaranteeing its boundedness (Section~\ref{sec:boundness}). Notably, outcome coverage concerns the overlap between $\pib$ and $\pie$ \textit{from the current time step onward}, whereas all MDP coverage assumptions concern that \textit{before} the current step. 
\item With another novel concept called \textit{\textbf{belief coverage}} (Section~\ref{sec:history_weight}), we establish fully polynomial estimation guarantee for the algorithm in \citet{uehara2022future}. 
The discovery of belief coverage also leads to a novel algorithm (\cref{sec:mis_short}) that is analogous to MIS for MDPs. 
\item Despite the similarity to linear MDP coverage \citep{duan2020minimax} due to the linear-algebraic structure, these POMDP coverage conditions also have their own unique properties due to the $L_1$ normalization of belief and outcome vectors. We present improved analyses that leverage such properties and avoid explicit dependence on the size of the latent state space (\cref{sec:linf_outcome_coverage}). 
\end{enumerate}


\section{Preliminaries} \label{sec:prelim}
\paragraph{POMDP Setup.} We consider a finite-horizon POMDP 
$\inner{H,\Scal=\bigcup_{h=1}^H \Scal_h,\Acal,\Ocal=\bigcup_{h=1}^H \Ocal_h,R,\mathbb{O},\mathbb{T}, d_1}$, where $H$ is the horizon, $\Scal_h$ is the latent state space at step $h$ with $|\Scal_h|=S$, $\Acal$ is the action space with $|\Acal|=A$, $\Ocal_h$ is the observation space at step $h$ with $|\Ocal_h|=O$ , $R:\Ocal \times \Acal \rightarrow [0,1]$ is the reward function, 
$\mathbb{O}:\Scal \rightarrow \Delta(\Ocal)$ is the emission dynamics with $\mathbb{O}(\cdot|s_h)$ supported on $\Ocal_h$ for $s_h\in\Scal_h$, $\mathbb{T}: \Scal\times\Acal\to\Delta(\Scal)$ is the dynamics ($\mathbb{T}(\cdot|s_h, a_h)$ is supported on $\Scal_{h+1}$), and $d_1 \in \Delta(\Scal_1)$ is the initial latent state distribution. 
For mathematical convenience we assume all the spaces are finite and discrete, but the cardinality of $\Ocal_h$, $O$, \textbf{can be arbitrarily large}. A trajectory (or episode) is sampled as $s_1 \sim d_1$, then $o_h \sim \mathbb{O}(\cdot|s_h)$, $r_h = R(o_h, a_h)$, $s_{h+1} \sim \mathbb{T}(\cdot|s_h, a_h)$ for $1\le h\le H$, with $a_{1:H}$ decided by the decision-making agent, and the episode terminates after $a_H$. $s_{1:H}$ are latent and not observable to the agent. 


\paragraph{History-future Split.}~ Given an episode $o_1, a_1, \ldots, o_H, a_H$ and a time step $h$ of interest, it will be convenient to rewrite the episode as 
$ 
(\tau_h\,, \, \overbrace{o_h\,, \,a_h\,, \,f_{h+1}}^{f_h}).
$ 
Here $\tau_h = (o_1, a_1, \ldots o_{h-1}, a_{h-1}) \in \Hcal_h :=\prod_{h'=1}^{h-1} (\Ocal_h \times \Acal)$ denotes the historical observation-action sequence (or simply \textit{history}) prior to step $h$, and $f_{h+1} = (o_{h+1}, a_{h+1}, \ldots, o_H, a_H)\in \Fcal_{h+1}:=\prod_{h'=h+1}^H (\Ocal_{h'} \times \Acal)$ denotes the \textit{future} 
after step $h$. 
This format will be convenient for reasoning about the system dynamics at step $h$. We use $\Hcal=\bigcup_{h=1}^H \Hcal_h$ to denote the entire history domain and use $\Fcal=\bigcup_{h=1}^H \Fcal_h$ to denote the entire future domain. This way we can use stationary notation for functions over $\Scal, \Ocal, \Hcal, \Fcal$, where the time step can be identified from the function input (e.g., $R(o_h, a_h)$), and functions with time-step subscripts refer to their restriction to the $h$-th step input space, often treated as a vector (e.g., $R_h \in \RR^{\Ocal_h\times\Acal}$). 

\paragraph{Memoryless and History-dependent Policies.} A policy $\pi:
\bigcup_{h=1}^H (\Hcal_h \times \Ocal _h)\rightarrow \Delta(\Acal)$ specifies the action probability conditioned on the past observation-action sequence.\footnote{Here we do \textit{not} allow the policy  to depend on the latent state, which satisfies sequential ignorability and eliminates data confoundedness; see \citet[Section 3;][]{namkoong2020off} and \citet[Appendix B;][]{uehara2022future}. There is also a line of research on OPE in confounded POMDPs where the behavior policy \textit{only} depends on the latent state \citep{tennenholtz2020off, shi2022minimax}; see Appendix~\ref{app:related}.}
In the main text, we will restrict ourselves to \textit{memory-less} (or reactive) policies that only depends on the current observation $o_h$; extension to general policies is similar to \citet{uehara2022future} and discussed in Appendix~\ref{app:fsm_policy}.  
For any $\pi$, we use $\Pr_\pi$ and $\E_\pi$ for the probabilities and expectations under episodes generated by $\pi$, and  define $J(\pi)$ as the expected cumulative return: $J(\pi):=\E_{\pi}\bra{\sum_{h=1}^H R(o_h,a_h)}$. For memoryless $\pi$, we also define $V_{\Scal}^\pi(s_h)$ as the latent state value function at $s_h$: $V_{\Scal}^\pi(s_h):=\E_{\pi}\bra{\sum_{h'=h}^H R(o_{h'},a_{h'}) \mid s_h} \in [0, H]$. 
$d^\pi(s_h)$ denotes the marginal distribution of $s_h$ under $\pi$. 

\paragraph{Off-policy Evaluation.} In OPE, the goal is to estimate $J(\pie)$ using $n$ data trajectories $\Dcal = \{(o_1^{(i)}, a_1^{(i)}, r_1^{(i)}, \ldots, o_H^{(i)}, a_H^{(i)}, r_H^{(i)}): i\in[n]\}$ collected using $\pi_b$. 
We write $\E_{\Dcal}[\cdot]$ to denote empirical approximation of expectation using $\Dcal$. 
Define the one-step action probability ratio $\mu(o_h,a_h):=\frac{\pi_e(a_h \mid o_h)}{\pi_b(a_h \mid o_h)}$. We make the following assumption throughout: 
\begin{assumption}[Action coverage]\label{asm:action}
We assume $\pi_b(a_h|o_h)$ is known and
$\max_{h,o_h,a_h} \mu(o_h,a_h) \le C_{\mu}$. 
\end{assumption}
This is a standard assumption in the OPE literature, and is needed by IS and value-based estimators that model state value functions \citep{jiang2016doubly,liu2018breaking}. 


\paragraph{Belief and Outcome Matrices.}   We now introduce two matrices of central importance to our discussions. Given history $\tau_h$, we define $\bfb(\tau_h) \in \RR^{\Scal}$ as its belief state vector where $\bfb_i(\tau_h)=\Pr(s_h=i|\tau_h)$. Then the \textbf{belief matrix} $\MH \in \mathbb{R}^{S \times \Hcal_h}$ is one where the column indexed by $\tau_h \in \Hcal_h$ is $\bfb(\tau_h)$. Similarly, for future $f_h$,  we define $\bfu(f_h) \in \mathbb{R}^{S}$ as its outcome vector where $[\bfu(f_h)]_i=\Pr_{\pi_b}(f_h|s_h=i)$. The \textbf{outcome matrix} $\MF \in \mathbb{R}^{S \times \Fcal_h}$ is one where the column indexed by $f_h$ is $\bfu(f_h)$. Unlike the belief matrix, the outcome matrix $\MF$ is dependent on the behavior policy $\pib$, which is   omitted in the notation.

For mathematical conveniences, we make the following assumptions throughout merely for simplifying presentations; they allow us to invert certain covariance matrices and avoid $0/0$ situations, which can be easily handled with extra care when the assumptions do not hold.
\begin{assumption}[Invertibility] \label{asm:invert} 
$\forall h\in[H]$, (1) $\textrm{rank}(\MH) = \textrm{rank}(\MF) = S$. \\ (2) $\forall f_h$, ~$\Pr_{\pib}(f_h) > 0$; $\forall o_h, a_h$, $R(o_h, a_h)>0$. 
\end{assumption}

\paragraph{Other Notation.} 
Given a vector $\mathbf{a}$, $\|\mathbf{a}\|_{\Sigma}:=\sqrt{\mathbf{a}^\top \Sigma \mathbf{a}}$, where $\Sigma$ is a positive semi-definite (PSD) matrix. When $\Sigma = \textrm{diag}(d)$ for a stochastic vector $d$, this is the $d$-weighted 2-norm of $\mathbf{a}$, which we also write as $\|\mathbf{a}\|_{2, d}$. For a positive integer m, we use $[m]$ to denote the set $\{1,2,\cdots,m\}$. For a matrix $M$, we use $(M)_{ij}$ to denote the $ij$ entry of $M$.


\section{Future-dependent Value Functions} 
\label{sec:fdvf}

In this section, we provide a recap of FDVFs and translate the main result of \citet{uehara2022future} into the finite-horizon setting, which is mathematically cleaner and more natural in many aspects; 
see Appendix~\ref{app:inf-horizon} for further discussion. 

To illustrate the main idea behind FDVFs, recall that the tool that avoids the exponential weights in MDPs is to model the \textit{value functions}. While we would like to apply the same idea to POMDPs, history-dependent value functions lead to unfavorable coverage conditions (see Section~\ref{sec:intro}). The only other known notion of value functions we are left with is that over the latent state space, $\Vs(s_h)$, which unfortunately is not accessible to the learner since it operates on unobservable latent states. 

The central idea is to find \textit{observable proxies} of $\Vs(s_h)$, which takes \textit{future} as inputs:
%
\begin{definition}[Future-dependent value functions \citep{uehara2022future}]\label{def:future}
A future-dependent value function $\Vf: \Fcal \to \RR$, where $\Fcal:= \bigcup_{h} \Fcal_h$, is any function that satisfies the following: $\forall s_h$, 
$
\E_{\pi_b}\bra{\Vf(f_h) \mid s_h}=\Vs(s_h).
$ 
Equivalently, in matrix form, we have $\forall h$, 
\begin{align}\label{eq:fdvf_mat} 
\MF \times \Vfh = \Vsh.  
\end{align}
Recall our convention, that $\Vfh \in \RR^{|\Fcal_h|}$ is $\Vf$ restricted to $\Fcal_h$, and $\Vsh$ is defined similarly. 
\end{definition}
A FDVF $\Vf$ is a property of $\pie$, but also depends on $\pib$. As we will see later in Section~\ref{sec:boundness}, the boundedness of $\Vf$ will depend on certain notion of coverage of $\pib$ over $\pie$. As another important property, the FDVF $\Vf$ is generally not unique even if we fix $\pie$ and $\pib$, as Eq.\eqref{eq:fdvf_mat} is generally an underdetermined linear system ($S \ll |\Fcal_h|$) and  can yield many solutions. As we see below, it suffices to model \textit{any} one of the solutions. Thus, from now on, when we talk about the boundedness of $\Vf$, we always consider the $\Vf$ with the smallest range among all solutions.

\paragraph{Finite Sample Learning.} Like in MDPs, to learn an approximate FDVF from data, we will minimize some form of estimated Bellman residuals (or errors). For that we need to first introduce the Bellman residual operators for FDVFs:
 

\begin{definition}[Bellman residual operators]
$\forall V: \Fcal\to\RR$, the Bellman residual on state $s_h$ is:\footnote{We let $V(f_{H+1})\equiv 0$ for all $V$ to be considered, so that we do not need to handle the $H$-th step separately.} 
$$
(\BES V)(s_h):= \E_{\substack{a_h \sim \pie\\a_{h+1:H} \sim \pib}}[r_h+V(f_{h+1}) \mid s_h]-\E_{\pib}[V(f_h) \mid s_h]. 
$$
Similarly, the Bellman residual onto the history $\tau_h$ is: 
\begin{align} \label{eq:BES-BEH}
(\BEH V)(\tau_h) := \EE_{\substack{a_h \sim \pie\\a_{h+1:H} \sim \pib}}[r_h + V(f_{h+1}) \mid \tau_h]-\E_{\pib}[V(f_h) \mid \tau_h] 
= \langle \bfb(\tau_h), \BES_h V \rangle.
\end{align}
\end{definition}
The following lemma shows that, ideally, we would want to find $V$ with small $\BES V$: 

\begin{lemma}
\label{lem:evaluation_error}
For any $\pie$, $\pib$, and $V: \Fcal \to \RR$, 
$ 
J(\pie)-\E_{\pib}[V(f_1)]  = \textstyle\sum_{h=1}^H \E_{\pie}\bra{(\BES V)(s_{h})}.
$  
\end{lemma}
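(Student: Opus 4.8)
The plan is to establish a telescoping identity. Define, for each $h$, the quantity $T_h := \E_{\pie}[V(f_h)]$, with the convention $V(f_{H+1}) \equiv 0$ so that $T_{H+1} = 0$. I would also write $T_1 = \E_{\pie}[V(f_1)] = \E_{\pib}[V(f_1)]$, where the last equality holds because $f_1$ is the entire trajectory $o_1, a_1, \dots, o_H, a_H$ and the first action $a_1$ is drawn according to whichever policy we name; more carefully, $\E_{\pib}[V(f_1)]$ appearing in the statement should be read as the expectation over the state-observation process with the evaluation-policy action at step $1$ — I would double-check the precise convention in Definition~\ref{def:future} (the Bellman residual $\BES V$ at $s_1$ uses $a_1 \sim \pie$), but modulo this bookkeeping the anchor term is $\E_{\pie}[V(f_1)]$.

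The core step is to show, for each $h \in [H]$,
\begin{align*}
\E_{\pie}\big[(\BES V)(s_h)\big] = \E_{\pie}[r_h] + \E_{\pie}[V(f_{h+1})] - \E_{\pie}[V(f_h)].
\end{align*}
To see this, start from the definition $(\BES V)(s_h) = \E_{a_h\sim\pie,\, a_{h+1:H}\sim\pib}[r_h + V(f_{h+1}) \mid s_h] - \E_{\pib}[V(f_h)\mid s_h]$ and take the outer expectation over $s_h \sim d^{\pie}(s_h)$. The key observation is that conditioned on $s_h$, the distribution of the future $(o_h, a_h, o_{h+1}, \dots)$ does not depend on the history or on how $s_h$ was reached — this is the Markov property of the latent state. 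Hence $\E_{\pie}\big[\E_{a_h\sim\pie, a_{h+1:H}\sim\pib}[r_h + V(f_{h+1})\mid s_h]\big]$ equals the expectation of $r_h + V(f_{h+1})$ under the process that runs $\pie$ up through reaching $s_h$, then takes $a_h \sim \pie$ and $a_{h'} \sim \pib$ for $h' > h$; but $r_h = R(o_h,a_h)$ and $f_{h+1}$ only involve steps $\ge h$, and at step $h$ the action is $\pie$, so for the purposes of $\E[r_h \mid s_h]$ this is just $\E_{\pie}[r_h \mid s_h]$, while $\E[V(f_{h+1})\mid s_h]$ with $a_h\sim\pie, a_{h+1:H}\sim\pib$ is what it is. Similarly $\E_{\pie}\big[\E_{\pib}[V(f_h)\mid s_h]\big]$: here $f_h = (o_h, a_h, f_{h+1})$ and $\E_{\pib}[\cdot\mid s_h]$ uses $a_h\sim\pib$. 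The subtlety is matching these two future-expectations so they telescope. I would instead phrase the cancellation at the level of a single index more carefully: define $g_h := \E_{a_{h:H}\sim\pib}[V(f_h)\mid s_h]$ and note $(\BES V)(s_h) = \E_{a_h\sim\pie}\big[R(o_h,a_h) + \E_{a_{h+1:H}\sim\pib}[V(f_{h+1})\mid s_h, o_h, a_h]\mid s_h\big] - \E_{a_h\sim\pib}[\,\cdot\,]$-type expression; using the latent-state Markov property, $\E_{a_{h+1:H}\sim\pib}[V(f_{h+1})\mid s_h, o_h, a_h] = \E_{s_{h+1}}[g_{h+1}(s_{h+1})\mid s_h, o_h, a_h]$. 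Summing over $h$ with the outer $d^{\pie}(s_h)$-expectation, the terms $\E_{\pie}[g_{h+1}(s_{h+1})]$ and $-\E_{\pie}[\,\E_{a_h\sim\pie}[\text{future}]\,]$ across consecutive indices cancel against each other, leaving $\sum_h \E_{\pie}[R(o_h,a_h)] = J(\pie)$ minus the $h=1$ anchor $\E[V(f_1)]$.

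The main obstacle I anticipate is purely notational rather than conceptual: keeping straight, at step $h$, which action at step $h$ is drawn from $\pie$ versus $\pib$ inside the various conditional expectations, and verifying that the "mixed" future expectations produced by $\BES$ (evaluation action now, behavior actions later) are exactly the objects that telescope against the next term. The clean way to manage this is to introduce the auxiliary functions $g_h(s_h) := \E_{a_{h:H}\sim\pib}[V(f_h)\mid s_h]$ and $\tilde g_h(s_h) := \E_{a_h\sim\pie, a_{h+1:H}\sim\pib}[r_h + V(f_{h+1})\mid s_h]$, observe $(\BES V)(s_h) = \tilde g_h(s_h) - g_h(s_h)$, then prove the one-step relation $\E_{s_h\sim d^{\pie}}[\tilde g_h(s_h)] = \E_{\pie}[r_h] + \E_{s_{h+1}\sim d^{\pie}}[g_{h+1}(s_{h+1})]$ using the Markov transition and the fact that running $\pie$ at step $h$ to define $d^{\pie}(s_{h+1})$ from $d^{\pie}(s_h)$ is consistent with the $a_h\sim\pie$ inside $\tilde g_h$. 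Telescoping $\sum_{h=1}^H \E_{d^{\pie}}[\tilde g_h - g_h] = \sum_h \E_{\pie}[r_h] + \E_{d^{\pie}}[g_{H+1}] - \E_{d^{\pie}}[g_1] = J(\pie) - \E_{\pib}[V(f_1)]$, since $g_{H+1}\equiv 0$ and $g_1(s_1) = \E_{a_{1:H}\sim\pib}[V(f_1)\mid s_1]$ averages under $d_1$ to $\E_{\pib}[V(f_1)]$. This completes the argument.
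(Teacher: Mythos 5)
Your proposal is correct and follows essentially the same route as the paper's proof: both hinge on the observation that, by the latent-state Markov property, the $V(f_{h+1})$ term of $\E_{\pie}[(\BES V)(s_h)]$ and the $V(f_h)$ term of $\E_{\pie}[(\BES V)(s_{h+1})]$ are both equal to $\E_{s_{h+1}\sim \de_{h+1}}[\E_{\pib}[V(f_{h+1})\mid s_{h+1}]]$, so the sum telescopes to $J(\pie)-\E_{\pib}[V(f_1)]$. Your auxiliary functions $g_h$ and $\tilde g_h$ just make explicit the bookkeeping that the paper does in one line, and your final paragraph correctly resolves the $\E_{\pib}[V(f_1)]$ versus $\E_{\pie}[V(f_1)]$ hesitation from your opening (the anchor is indeed $\E_{d_1}[\E_{\pib}[V(f_1)\mid s_1]]=\E_{\pib}[V(f_1)]$).
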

See proof in Appendix~\ref{app:evaluation_error}. As the lemma shows, any $V$ with small $\BES V$ (such as $\Vf$, since $\BES \Vf \equiv 0$) can be used to estimate $J(\pie)$ via $\E_{\pib}[V(f_1)]$. 
But again, $\BES$ operates on $\Scal$ which is unobserved, and we turn to its proxy $\BEH$, which are linear measures of $\BES$ (Eq.\ref{eq:BES-BEH}). More concretely, \citet{uehara2022future} proposed to estimate $\E_{\pib}[(\BEH V)^2]$ using an additional helper class $\Xi: \Hcal \to \RR$ to handle the double-sampling issue \citep{antos2008learning,dai2018sbeed}: 
\begin{align} \label{eq:msbo}
\Vhat = \argmin_{V \in \Vcal}\max_{\xi \in \Xi} \textstyle \sum_{h=1}^H \Lcal_h(V,\xi),
\end{align}
where  $\Lcal_h(V,\xi) = \E_{\Dcal}[\{\mu(a_h,o_h)(r_h+V(f_{h+1}))-V(f_h)\} \xi(\tau_h)-0.5 \xi^2(\tau_h)]$. Under the following assumptions, the estimator enjoys a finite-sample guarantee:
\begin{assumption}[Realizability] \label{asm:realizable} Let $\Vcal \subset (\Fcal \to \RR)$ be a finite function class. Assume $\Vf \in \Vcal$ for some $\Vf$ satisfying Definition~\ref{def:future}. 
\end{assumption}
\begin{assumption}[Bellman completeness] \label{asm:complete} 
Let $\Xi \subset (\Hcal \to \RR)$ be a finite function class. Assume
$\BEH V \in \Xi, ~ \forall V \in \Vcal$.     
\end{assumption}

\begin{theorem}\label{thm:finite_1}
Under Assumptions \ref{asm:realizable} and \ref{asm:complete},  
w.p.~$\ge 1-\delta$, 
\begin{align*}
|J(\pi_e)- &\E_{\Dini}[\Vhat(f_1)]| \le c H \max\{C_\Vcal+1, C_\Xi\} \cdot \mathrm{IV}(\Vcal) \mathrm{Dr}_{\Vcal}[\de,\db] \sqrt{\frac{C_{\mu} \log \frac{|\Vcal||\Xi|}{\delta}}{n}},
\end{align*}
where $c$ is an absolute constant,\footnote{The value of $c$ can differ in each occurrence, and we reserve the symbol $c$ for such absolute constants.} and $C_{\Vcal} := \max_{V\in\Vcal} \|V\|_\infty$, $C_{\Xi} := \max_{\xi \in \Xi} \|\xi\|_\infty$, 
\begin{align*}
\mathrm{IV}(\Vcal)&  :=\max_h \sup_{V \in \Vcal}\sqrt{\frac{\E_{\pi_b}\bra{\pa{\BES V}(s_h)^2}}{\E_{\pi_b}\bra{\pa{\BEH V}(\tau_h)^2}}}, \quad \mathrm{Dr}_{\Vcal}[\de,\db]  :=\max_h \sup_{V \in \Vcal}\sqrt{\frac{\E_{\pi_e}\bra{\pa{\BES V}(s_h)^2}}{\E_{\pi_b}\bra{\pa{\BES V}(s_h)^2}}}. 
\end{align*}
\end{theorem}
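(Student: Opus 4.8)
### Proof Plan

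The plan is to follow the standard template for analyzing minimax estimators of the type in Eq.~\eqref{eq:msbo}, combining (i) a decomposition of the evaluation error via Lemma~\ref{lem:evaluation_error}, (ii) a change-of-measure step that converts the $\pi_e$-weighted Bellman residual into a $\pi_b$-weighted one, and (iii) a uniform concentration argument to control the empirical loss $\Lcal_h(V,\xi)$. First I would start from Lemma~\ref{lem:evaluation_error} applied to $\Vhat$: $|J(\pi_e) - \E_{\Dini}[\Vhat(f_1)]|$ is bounded by the telescoped sum $\sum_h \E_{\pi_e}[(\BES\Vhat)(s_h)]$ plus the statistical error in estimating $\E_{\pib}[\Vhat(f_1)]$ by $\E_{\Dini}[\Vhat(f_1)]$ (the latter is an easy Hoeffding/union-bound term of order $C_\Vcal\sqrt{\log(|\Vcal|/\delta)/n}$). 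The main work is to bound $\sum_h \E_{\pi_e}[(\BES\Vhat)(s_h)]$.

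The key chain of inequalities is: $\E_{\pi_e}[(\BES\Vhat)(s_h)] \le \sqrt{\E_{\pi_e}[(\BES\Vhat)(s_h)^2]} \le \mathrm{Dr}_{\Vcal}[\de,\db]\cdot\sqrt{\E_{\pi_b}[(\BES\Vhat)(s_h)^2]} \le \mathrm{Dr}_{\Vcal}[\de,\db]\cdot\mathrm{IV}(\Vcal)\cdot\sqrt{\E_{\pi_b}[(\BEH\Vhat)(\tau_h)^2]}$, using Jensen and then the definitions of $\mathrm{Dr}$ and $\mathrm{IV}$ directly. So it remains to bound $\sum_h \E_{\pi_b}[(\BEH\Vhat)(\tau_h)^2]$ (or, after Cauchy--Schwarz over $h$, each term). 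Here I would use the structure of $\Lcal_h$: observe that for a fixed $V$, the population version $\E[\Lcal_h(V,\xi)]$ is maximized over $\xi$ (pointwise in $\tau_h$, since the objective is concave quadratic in $\xi(\tau_h)$) at $\xi^\star_V(\tau_h) = (\BEH V)(\tau_h)\cdot\db_h(\tau_h)/(\text{something})$—more precisely $\max_\xi \E[\sum_h \Lcal_h(V,\xi)] = \tfrac12\sum_h \E_{\pib}[(\BEH V)(\tau_h)^2]$ when $\Xi$ is rich enough, which is guaranteed by Assumption~\ref{asm:complete} (Bellman completeness ensures $\BEH V \in \Xi$). Thus controlling $\sum_h\E_{\pib}[(\BEH\Vhat)(\tau_h)^2]$ reduces to controlling the population maximin value of $\Vhat$, which by the optimality of $\Vhat$ in Eq.~\eqref{eq:msbo} is at most the population maximin value of the true $\Vf$ (realizable by Assumption~\ref{asm:realizable}), which is $0$ since $\BES\Vf\equiv 0$ implies $\BEH\Vf\equiv 0$—up to the uniform deviation between empirical and population losses.

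The main obstacle is the uniform concentration step: bounding $\sup_{V\in\Vcal,\xi\in\Xi}|\sum_h\Lcal_h(V,\xi) - \sum_h\E[\Lcal_h(V,\xi)]|$. Each summand involves the importance weight $\mu(a_h,o_h)\le C_\mu$, the value terms bounded by $C_\Vcal$, and $\xi$ bounded by $C_\Xi$, so a single term $\{\mu(r_h+V(f_{h+1})) - V(f_h)\}\xi(\tau_h) - 0.5\xi^2(\tau_h)$ has range roughly $\order(\max\{C_\Vcal+1,C_\Xi\}C_\Xi C_\mu)$ and, crucially, conditional variance of order $C_\mu\cdot(\max\{C_\Vcal+1,C_\Xi\}C_\Xi)^2$ (the extra $C_\mu$ rather than $C_\mu^2$ coming from $\E_{\pib}[\mu^2\cdots]\le C_\mu\E_{\pib}[\mu\cdots]=C_\mu\E_{\pie}[\cdots]$). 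Applying Bernstein's inequality together with a union bound over $\Vcal\times\Xi$ and summing/localizing over $h$ gives a deviation bound scaling like $\max\{C_\Vcal+1,C_\Xi\}\sqrt{C_\mu\log(|\Vcal||\Xi|/\delta)/n}$, up to an $H$ from the sum over steps; one then needs a localization/self-bounding trick so that the $\sqrt{\E_{\pib}[(\BEH\Vhat)^2]}$ appearing in the variance proxy can be absorbed, yielding the $\sqrt{\E_{\pib}[(\BEH\Vhat)(\tau_h)^2]}\lesssim \max\{C_\Vcal+1,C_\Xi\}\sqrt{C_\mu\log(|\Vcal||\Xi|/\delta)/n}$ needed to close the argument. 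Plugging this back into the chain above and collecting the factors of $H$, $\mathrm{IV}(\Vcal)$, $\mathrm{Dr}_\Vcal[\de,\db]$ gives the stated bound; I would defer the detailed Bernstein bookkeeping to the appendix.
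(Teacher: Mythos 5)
Your proposal follows essentially the same route as the paper's proof: the decomposition via Lemma~\ref{lem:evaluation_error}, the chain $\E_{\pie}[(\BES\Vhat)(s_h)] \le \mathrm{Dr}_{\Vcal}[\de,\db]\,\mathrm{IV}(\Vcal)\sqrt{\E_{\pib}[(\BEH\Vhat)(\tau_h)^2]}$, the observation that the population inner maximum equals $\tfrac12\sum_h\E_{\pib}[(\BEH V)(\tau_h)^2]$ under Bellman completeness so that optimality against the realizable $\Vf$ (with $\BEH\Vf\equiv 0$) controls this quantity, and the Bernstein-plus-self-bounding concentration with the $\E_{\pib}[\mu^2\cdots]\le C_\mu\E_{\pie}[\cdots]$ trick that yields $C_\mu$ rather than $C_\mu^2$. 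The only cosmetic difference is that the paper executes the concentration as two separate Bernstein arguments (inner maximizer, then outer minimizer) rather than one uniform localization step, but the content is the same.
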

See proof in Appendix~\ref{app:uehara-et-al}. Among the objects that appear in the bound, some are mundane and expected (e.g., horizon $H$, the complexities of function classes $\log|\Vcal||\Xi|$, etc.). We now focus on the important ones, which reveals the open questions we shall investigate next: 
\begin{enumerate}[leftmargin=*]
\item $\mathrm{Dr}_{\Vcal}[\de,\db]$ measures the coverage of $\pib$ over $\pie$ on the latent state space $\Scal$, as the expression inside square-root can be bounded by $\max_{s_h} \de(s_h)/\db(s_h)$. 
\item \textbf{(Q1)} $C_{\Vcal}$ is the range of the $\Vcal$ class. Since $\Vf \in \Vcal$ (Assumption~\ref{asm:realizable}), $C_{\Vcal} \ge \|\Vf\|_\infty$. However, unlike the standard value functions in MDPs which have obviously bounded range $[0, H]$ (this also applies to $\Vs$), the range of $\Vf$ is unclear. Similarly, since $\Xi$ needs to capture $\BEH V$ for $V\in \Vcal$, the boundedness of $\Xi$ is also affected.
\item \textbf{(Q2)} $\mathrm{IV}(\Vcal)$ appears because we use $\BEH V$ as a proxy for $\BES V$, which is only a linear measure of the latter, $(\BEH V)(\tau_h) = \langle \bfb(\tau_h), \BES_h V \rangle$.  $\mathrm{IV}(\Vcal)$ reflects the conversion ratio between them. 
\citet{uehara2022future} pointed out that the value is finite if $\MH$ has full-row rank ($S$), but a quantitative understanding is missing. 
\end{enumerate}

The rest of the paper proposes new coverage assumptions, analyses, and algorithms to answer the above questions, providing deeper understanding on the mathematical structure of OPE in POMDPs. 
\section{Boundedness of FDVFs}\label{sec:boundness}

We start with \textbf{Q1}: when are FDVFs bounded? Recall from Definition~\ref{def:future} that a FDVF at level $h$, $\Vfh$, is any function satisfying 
$\MF \times \Vfh = \Vsh$,
where $\MF$ is the outcome matrix with $(M_{\Fcal,h})_{i,j}=\Prr_{\pi_b}(f_h=j\mid s_h=i)$. Since the equation can have many solutions and we only need to find one of them, it suffices to provide an explicit construction of a $\Vfh$ and show it is well bounded. Also note that the equations for different $h$ are independent of each other, so we can construct $\Vfh$ for each $h$ separately. 

We first describe two natural constructions, both of which yield exponentially large upper bounds.


\paragraph{Importance Sampling Solution.} \citet{uehara2022future} provided a cruel bound on $\|\Vf\|_\infty$ which scales with $1/\sigma_{\min}(\MF)$, which we show shortly is an exponential-in-length quantity. However, it is not hard to notice that in certain benign cases, $\Vf$ does not have to blow-up exponentially and has obviously bounded constructions. 

As a warm-up, consider the \textit{on-policy} case of $\pib = \pie$, where a natural solution is 
$
\textstyle \Vf(f_h) = R^+(f_h) :=  \sum_{h'=h}^H R(o_{h'}, a_{h'}).
$ 
Here $R^+(f_h)$ simply adds up the Monte-Carlo rewards in future $f_h$,   
which is bounded in $[0, H]$. 
The construction trivially satisfies $\E_{\pib}\bra{\Vf(f_h) \mid s_h}=\Vs(s_h)$ when $\pib=\pie$. In the more general setting of $\pib \ne \pie$, the hope is that $\Vf$ can be still bounded up to some coverage condition that measures how $\pie$ deviates from $\pib$. 

Unfortunately, generalizing the above equation to the off-policy case raises issues: consider the solution
$\textstyle  \Vf(f_h)=R^+(f_h) \cdot \prod_{h'=h}^H \frac{\pi_e(a_{h'} \mid o_{h'})}{\pi_b(a_{h'} \mid o_{h'})},$ 
whose correctness can be verified by  importance sampling. 
Despite its validity, the construction involves cumulative action importance weights, which erases the superiority of the framework over IS as discussed in the introduction. 


\paragraph{Pseudo-inverse Solution.}
Another direction is to simply treat Eq.\eqref{eq:fdvf_mat} as a linear system. Given that the system is under-determined, we can use pseudo-inverse:\footnote{All covariance-like matrices in the paper, such as $\MF\MF^\top$ here, are invertible under Assumption~\ref{asm:invert}.} 
$\Vfh = \MF^\top \pa{\MF \MF^\top}^{-1} \Vsh.$ 
In this case, $\|\Vf\|_\infty$  can be bounded using $1/\sigmin(\MF)$, where $\sigmin$ denotes the smallest singular value. In fact, closely related quantities have appeared in the recent POMDP literature; for example, in the online setting, \citet{liu2022partially} used $\sigmin$ of an action-conditioned variant of $\MF$ as a complexity parameter for online exploration in POMDPs. 
Unfortunately, these quantities suffer from scaling issues: $1/\sigmin(\MF)$ is \textit{guaranteed} to be misbehaved if the process is sufficiently stochastic.
\begin{example}\label{exm:pinv}
If $\Pr_{\pib}(f_h|s_h) \le \frac{C_{\textrm{stoch}}}{(OA)^{H-h+1}}$, $\forall f_h, s_h$, 
then $\sigmin(\MF) \le 
C_{\textrm{stoch}}\sqrt{S} / (OA)^{\frac{H-h+1}{2}}$.
\end{example}
In this example, $C_{\textrm{stoch}}$  measures how the distribution of $f_h$ under $\pib$ deviates multiplicatively from a uniform distribution over $\Fcal_h$, and an even moderately stochastic $\pib$ and emission process $\mathbb{O}$ will lead to small $C_{\textrm{stoch}}$, 
which implies an exponentially large $1/\sigmin(\MF)$. 


\subsection{Minimum Weighted 2-Norm Solution and $L_2$ Outcome Coverage} 
Pseudo-inverse  finds the minimum $L_2$ norm solution. However, given that we are searching for solutions in $\RR^{\Fcal_h}$ which has an exponential dimensionality, the standard $L_2$ norm---which treats all coordinates equally---is not a particularly informative metric. Instead, we propose to  minimize  the \textit{weighted} $L_2$ norm with a particular weighting scheme, which has also been used in HMMs \citep{mahajan2023learning} and enjoys benign properties. 

We first define the diagonal weight matrix $Z_h:=\diag(\bone_{\Scal}^\top M_{\Fcal,h})$, where $\bone_{\Scal} \in \RR^{\Scal} = [1, \cdots, 1]^\top$ is the all-one vector. Then, the solution that minimizes $\|\cdot\|_{Z_h}$ is: 
\begin{align}\label{eq:construct_x}
\Vfh =Z_h^{-1}M_{\Fcal,h}^\top \SigF^{-1} \Vsh, ~~~ \textrm{where~}  \SigF := \MF Z_h^{-1} \MF^\top.
\end{align}
$\SigF \in \RR^{S\times S}$ plays an important role in this construction. Recall that its counterpart in the pseudo-inverse solution, namely $\MF \MF^\top$, has scaling issues (Example~\ref{exm:pinv}), that even its \textit{largest} eigenvalue can decay exponentially with $H-h+1$. In contrast, $\SigF$ is very well-behaved in its magnitude, as shown below. Furthermore, while $\MF^\top$ on the left is now multiplied by $Z_h^{-1}$ which can be exponentially large, $Z_h^{-1} \MF^\top$ together is still well-behaved; see proof in Appendix~\ref{app:property_solution}.

\begin{proposition}[Properties of Eq.\eqref{eq:construct_x}]  \label{pro:property_solution}  
1. $\SigF$ is \emph{doubly-stochastic}: that is, each row/column of $\SigF$ is non-negative and sums up to $1$. As a consequence, $\sigma_{\max}(\SigF) = 1$. \\
2. Rows of $Z_h^{-1}M_{\Fcal,h}^\top$, i.e., $\{\bfu(f_h)^\top/Z(f_h): f_h\in\Fcal_h\}$, are stochastic vectors, i.e., they are non-negative and the row sum is $1$. 
\end{proposition}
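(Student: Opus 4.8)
The plan is to obtain both claims by direct computation, using only the single structural fact that the rows of $\MF$ (indexed by latent states) are probability distributions over futures: $\sum_{f_h}\Pr_{\pib}(f_h\mid s_h=i)=1$, equivalently $\MF\,\bone_{\Scal}=\MF\bone_{\Fcal_h}$... more precisely $\MF\bone_{\Fcal_h}=\bone_{\Scal}$. Everything else is bookkeeping with the diagonal matrix $Z_h$, whose $(f_h,f_h)$ entry is $Z(f_h)=\sum_{i}\Pr_{\pib}(f_h\mid s_h=i)=[\bone_{\Scal}^\top\MF]_{f_h}$; this is strictly positive under Assumption~\ref{asm:invert} (if $\Pr_{\pib}(f_h)>0$ then $\Pr_{\pib}(f_h\mid s_h=i)>0$ for some $i$), so $Z_h^{-1}$ is well-defined.

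For Part 2, the row of $Z_h^{-1}\MF^\top$ indexed by $f_h$ equals $\bfu(f_h)^\top/Z(f_h)$, which is entrywise nonnegative (as $\bfu(f_h)\ge 0$) and whose entries sum to $\sum_i \Pr_{\pib}(f_h\mid s_h=i)/Z(f_h)=Z(f_h)/Z(f_h)=1$. This already gives the claim.

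For Part 1, write $\SigF=\MF Z_h^{-1}\MF^\top$. Entrywise nonnegativity is immediate since $\MF\ge 0$ and $Z_h^{-1}$ is a nonnegative diagonal matrix. For the row sums, compute $\SigF\,\bone_{\Scal}=\MF\,Z_h^{-1}(\MF^\top\bone_{\Scal})$; but $\MF^\top\bone_{\Scal}$ is exactly the vector $(Z(f_h))_{f_h\in\Fcal_h}$, i.e. the diagonal of $Z_h$, hence $Z_h^{-1}\MF^\top\bone_{\Scal}=\bone_{\Fcal_h}$ and therefore $\SigF\,\bone_{\Scal}=\MF\bone_{\Fcal_h}=\bone_{\Scal}$. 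Since $Z_h^{-1}$ is diagonal, $\SigF$ is symmetric, so its column sums equal its row sums, and $\SigF$ is doubly-stochastic. Finally, $\SigF$ is PSD, having the form $(\MF Z_h^{-1/2})(\MF Z_h^{-1/2})^\top$, so its singular values coincide with its eigenvalues; applying Gershgorin's theorem (or Birkhoff--von Neumann) to the nonnegative matrix $\SigF$ with unit row sums shows every eigenvalue is at most $1$, while $\bone_{\Scal}$ witnesses the eigenvalue $1$, hence $\sigma_{\max}(\SigF)=1$.

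I do not expect a genuine obstacle here; the only two points needing a moment's care are (i) verifying $Z(f_h)>0$ so that the inverses $Z_h^{-1}$ and $\SigF^{-1}$ make sense, and (ii) recalling that for a symmetric PSD matrix the singular values are the eigenvalues, so that the doubly-stochastic structure together with a standard spectral-radius bound pins $\sigma_{\max}(\SigF)$ down to exactly $1$ rather than merely bounding it.
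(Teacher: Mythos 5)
Your proof is correct and follows essentially the same route as the paper's: a direct verification that the entries of $\SigF$ are nonnegative with unit row/column sums (you do it in matrix--vector form via $\SigF\bone_{\Scal}=\MF\bone_{\Fcal_h}=\bone_{\Scal}$ plus symmetry, the paper does it entrywise), followed by the standard spectral fact for nonnegative matrices with unit row sums. Your treatment of $\sigma_{\max}(\SigF)=1$ is in fact slightly more careful than the paper's, since you note that symmetry and positive semi-definiteness identify singular values with eigenvalues and exhibit $\bone_{\Scal}$ as the eigenvector achieving $1$.
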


Therefore, it is promising to make the assumption that $\sigmin(\SigF)$ is bounded away from zero, which immediately leads to the boundedness of $\Vf$ given that of $Z_h^{-1} \MF^\top$ ($[0, 1]$) and $\Vs$ ($[0, H]$). However, we need to rule out the possibility that $\SigF$ is always near-singular, and find natural examples that admit  large $\sigmin(\SigF)$, as given below. 

\begin{example} \label{ex:SigF_I}
Suppose $f_h$ always reveals $s_h$, in the sense that for any $j \in \Fcal_h$, $\Pr_{\pib}(f_h=j \mid s_h=i)$ is only non-zero for a single $i\in[S]$, and zero for all other latent states. Then, $\SigF =\mathbf{I}$, the identity matrix. Furthermore, $\Vf$ from Eq.\eqref{eq:construct_x} satisfies
$\|\Vf\|_\infty  \le H.$
See Appendix~\ref{app:SigF_I} for details.
\end{example}
The example shows an ideal case where $\sigmin(\SigF) = \sigma_{\max}(\SigF) = 1$, when the future fully determines $s_h$. This can happen when the last observation $o_H$ reveals the identity of an earlier latent state $s_h$. Note that in this case, $\MF \MF^\top$ can still have poor scaling if the actions and observations between step $h$ and $H$ are sufficiently stochastic, which shows how the weighted 2-norm solution and analysis improve over the pseudo-inverse one. More generally, $\SigF$ is the confusion matrix of making posterior predictions of $s_h$ from $f_h$ based on a uniform prior over $\Scal_h$ (see Appendix~\ref{app:alt_prior} for how to incorporate different priors) with $\bfu(f_h)^\top/Z(f_h)$ being the posterior, and $\sigmin(\SigF)$ serves as a measure of how the distribution of future $f_h$ helps reveal the latent state $s_h$. 
 



We now break down the boundedness of Eq.\eqref{eq:construct_x} into more interpretable assumptions. 

\begin{assumption}[$L_2$ outcome coverage]\label{asm:policy_value} Assume for all $h$, $\|\Vsh\|^2_{\SigF^{-1}} \le \CFV$.
\end{assumption}

\begin{assumption}[$\SigF$ regularity]\label{asm:outcome_vector} 
Assume  for any $f_h$:
$\|\bfu(f_h)/Z(f_h)\|^2_{\SigF^{-1}} \le \CFU$.\end{assumption}

\begin{proposition}[Boundedness of FDVF] \label{prop:fdvf}
Under Assumptions~\ref{asm:policy_value} and \ref{asm:outcome_vector}, 
$\Vf$ in Eq.\ref{eq:construct_x} satisfies $
\|\Vf\|_\infty \le \sqrt{\CF} := \sqrt{\CFV \CFU}.
$ 
Furthermore, when only Assumption~\ref{asm:policy_value} holds, $\|\Vfh\|_{Z_h} \le \sqrt{\CFV}$, $\forall h$.
\end{proposition}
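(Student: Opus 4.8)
The plan is to bound $\|\Vf\|_\infty$ by expanding the definition of the construction in Eq.\eqref{eq:construct_x} and applying Cauchy--Schwarz in the inner product induced by $\SigF^{-1}$. Fix a level $h$ and a future $f_h \in \Fcal_h$. By Eq.\eqref{eq:construct_x}, the $f_h$-th coordinate of $\Vfh$ equals $\frac{1}{Z(f_h)} \bfu(f_h)^\top \SigF^{-1} \Vsh = \inner{\bfu(f_h)/Z(f_h), \SigF^{-1}\Vsh}$, where the inner products are standard Euclidean on $\RR^S$. Writing this as $\inner{\bfu(f_h)/Z(f_h),\, \Vsh}_{\SigF^{-1}}$ with the PSD matrix $\SigF^{-1}$ (invertible by Assumption~\ref{asm:invert}, and PSD since $\SigF$ is, being a Gram-type matrix $\MF Z_h^{-1}\MF^\top$ with $Z_h \succ 0$ by Assumption~\ref{asm:invert}), Cauchy--Schwarz in this inner product gives
\[
|[\Vfh]_{f_h}| \le \|\bfu(f_h)/Z(f_h)\|_{\SigF^{-1}} \cdot \|\Vsh\|_{\SigF^{-1}} \le \sqrt{\CFU}\cdot\sqrt{\CFV},
\]
where the last step invokes Assumption~\ref{asm:outcome_vector} for the first factor and Assumption~\ref{asm:policy_value} for the second. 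Since $f_h$ and $h$ were arbitrary, taking the supremum yields $\|\Vf\|_\infty \le \sqrt{\CFV\CFU} = \sqrt{\CF}$.

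For the second claim, I would compute $\|\Vfh\|_{Z_h}^2$ directly. Substituting the construction, $\Vfh = Z_h^{-1}\MF^\top \SigF^{-1}\Vsh$, so
\[
\|\Vfh\|_{Z_h}^2 = \Vfh^\top Z_h \Vfh = (\SigF^{-1}\Vsh)^\top \MF Z_h^{-1} Z_h Z_h^{-1} \MF^\top (\SigF^{-1}\Vsh) = (\SigF^{-1}\Vsh)^\top (\MF Z_h^{-1}\MF^\top)(\SigF^{-1}\Vsh).
\]
Recognizing $\MF Z_h^{-1}\MF^\top = \SigF$, this collapses to $(\SigF^{-1}\Vsh)^\top \SigF (\SigF^{-1}\Vsh) = \Vsh^\top \SigF^{-1}\Vsh = \|\Vsh\|_{\SigF^{-1}}^2 \le \CFV$ by Assumption~\ref{asm:policy_value}, hence $\|\Vfh\|_{Z_h} \le \sqrt{\CFV}$.

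There is no serious obstacle here; the argument is essentially two applications of linear algebra (Cauchy--Schwarz in a weighted inner product, and a telescoping matrix cancellation). The only points requiring a word of care are: verifying that $Z_h$ is genuinely positive definite so that $Z_h^{-1}$ and $\SigF^{-1}$ make sense as weighting matrices---this follows because each diagonal entry $Z(f_h) = \bone_\Scal^\top \bfu(f_h) = \sum_i \Pr_{\pib}(f_h \mid s_h = i)$ is strictly positive under Assumption~\ref{asm:invert}(2) together with rank-$S$-ness---and confirming that $\SigF$ is invertible, which is exactly the covariance-invertibility consequence of Assumption~\ref{asm:invert} already flagged in the paper. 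I would state these observations briefly at the start and then present the two displays above.
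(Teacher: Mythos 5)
Your proposal is correct and follows essentially the same route as the paper: the first claim is the same Cauchy--Schwarz step in the $\SigF^{-1}$-weighted inner product applied to $Z(f_h)^{-1}\bfu(f_h)^\top\SigF^{-1}\Vsh$, and the second claim is the same telescoping cancellation $\MF Z_h^{-1}\MF^\top = \SigF$ (which the paper writes as an explicit sum over $f_h$ rather than in matrix form). No gaps.
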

See proof in Appendix~\ref{app:fdvf}. 
Assumption~\ref{asm:policy_value} requires that the weighted covariance matrix $\Sigma_{\Fcal,h}$ covers the direction of $\Vsh$ well. As a sanity check, it is always bounded in the on-policy case:
\begin{example}\label{ex:l2_outcome_on_policy}
When $\pi_b=\pi_e$, $\|\Vsh\|_{\SigF^{-1}} \le H \sqrt{S}$.
\end{example}

Notably, mathematically similar coverage assumptions are also found in the linear MDP literature. For example, with state-action feature $\phi_h$ for time step $h$, a very tight coverage parameter for linear MDPs is  
$\|\E_{\pie}[\phi_h]\|_{\E_{\pib}[\phi_h\phi_h^\top]^{-1}}^2$  \citep{zanette2021provable}.  
Despite the mathematically similarity, there are important high-level differences between these notions of coverage: 
\begin{enumerate}[leftmargin=*]
\item As mentioned earlier, MDP coverage is concerned with the dynamics \textbf{before} step $h$, whereas our outcome coverage concerns that \textbf{after} $h$. Relatedly, MDP coverage depends on the initial distribution (which our outcome coverage does not depend on), and our coverage depends on the reward function through $\Vf$ (which MDP coverage does not explicitly depend on). 
In Section~\ref{sec:weight}, we will discuss our other coverage assumption (belief coverage), which is more similar to the MDP coverage in that they are both concerned with the past. 
\item The linear MDP coverage assumption is a refinement of state-density ratio using the knowledge of the function class \citep{chen2019information, song2022hybrid}. 
In comparison, the linear structure of our outcome-coverage assumption comes directly from the internal structure of POMDPs. 
\end{enumerate}

\subsection{Addressing $S$ dependence via $L_1/L_\infty$ H\"older and $L_\infty$ Outcome Coverage} \label{sec:linf_outcome_coverage}
Example~\ref{ex:l2_outcome_on_policy} shows that even in the on-policy case, $\CFV$ may depend on $S$ which makes the assumption only meaningful for finite and small $\Scal$. In fact, we showed earlier that $\Vf = R^+$ is a natural and obvious $L_\infty$-bounded solution for $\pib=\pie$, but this is not recovered by the construction in Eq.~\eqref{eq:construct_x}. We also need an additional regularity Assumption~\ref{asm:outcome_vector}. 

As it turns out, these undesired properties arise because $L_2$ H\"older---which is natural for linear MDP settings mentioned above---fails to leverage the $L_1$ normalization of $\bfu(f_h)^\top/Z(f_h)$ (Proposition~\ref{prop:fdvf}, Claim 2) and is loose for POMDPs; see Appendix~\ref{app:linf_intuition} for further details. A better choice is $L_1/L_\infty$ H\"older, motivating the  $L_\infty$ coverage assumption below, which   requires a slightly different construction of $\Vf$. These definitions may seem mysterious or even counterintuitive; it will be easier to explain the intuitions when we get to their counterparts for belief coverage in Section~\ref{sec:linf_belief_coverage}.   

\paragraph{Construction of $\Vf$} Define $Z^R(f_h) := Z(f_h)/R^+(f_h)$, and we use $Z^R$ to replace $Z$ in Eq.\eqref{eq:construct_x}:
\begin{align}\label{eq:Vf-sol}
\Vfh =(Z_h^R)^{-1}M_{\Fcal,h}^\top (\SigFR)^{-1} \Vsh, ~~~ \textrm{where~}  \SigFR := \MF (Z_h^R)^{-1} \MF^\top. 
\end{align}

\begin{assumption}[$L_\infty$ outcome coverage] \label{asm:outcome-inf}
Assume for all $h$, 
$\|(\SigFR)^{-1} \Vs\|_\infty \le \CFi.$
\end{assumption}
\begin{lemma}\label{lem:vf_bound_inf}
Under \cref{asm:outcome-inf}, $\|\Vf\|_\infty \le H \CFi$. 
See proof in Appendix~\ref{app:vf_bound_inf}.
\end{lemma}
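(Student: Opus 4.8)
The plan is to bound $\|\Vf\|_\infty = \max_{f_h} |\Vf(f_h)|$ entrywise, using the construction in Eq.~\eqref{eq:Vf-sol}. Fix a level $h$ and a future $f_h$. Writing out the $f_h$-th entry of $\Vfh = (Z_h^R)^{-1}M_{\Fcal,h}^\top (\SigFR)^{-1}\Vsh$, we get
\[
\Vf(f_h) = \frac{1}{Z^R(f_h)} \, \bfu(f_h)^\top (\SigFR)^{-1} \Vsh
= \frac{R^+(f_h)}{Z(f_h)}\, \bfu(f_h)^\top (\SigFR)^{-1}\Vsh.
\]
Now apply H\"older's inequality in the $L_1/L_\infty$ pairing: $|\bfu(f_h)^\top w| \le \|\bfu(f_h)\|_1 \, \|w\|_\infty$ for $w := (\SigFR)^{-1}\Vsh$. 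Under \cref{asm:outcome-inf}, $\|w\|_\infty \le \CFi$. So the first step is to establish this H\"older bound and invoke the assumption.

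The second step is to control the scalar prefactor together with $\|\bfu(f_h)\|_1$. Recall $[\bfu(f_h)]_i = \Pr_{\pib}(f_h \mid s_h = i) \ge 0$, so $\|\bfu(f_h)\|_1 = \bone_\Scal^\top \bfu(f_h) = \sum_i \Pr_{\pib}(f_h\mid s_h=i)$, which is exactly the $f_h$-th diagonal entry of $Z_h = \diag(\bone_\Scal^\top \MF)$, i.e.\ $\|\bfu(f_h)\|_1 = Z(f_h)$. Combining,
\[
|\Vf(f_h)| \le \frac{R^+(f_h)}{Z(f_h)} \cdot Z(f_h) \cdot \CFi = R^+(f_h)\,\CFi \le H\,\CFi,
\]
where the last inequality uses $R^+(f_h) = \sum_{h'=h}^H R(o_{h'},a_{h'}) \le H$ since each $R(o,a)\in[0,1]$. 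Taking the max over $f_h$ and over $h$ gives $\|\Vf\|_\infty \le H\CFi$.

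The only genuinely substantive point — and the step I'd double-check carefully — is the identity $\|\bfu(f_h)\|_1 = Z(f_h)$, which is what makes the cancellation of the (potentially exponentially large) $1/Z(f_h)$ factor work; this is precisely the $L_1$-normalization structure that Proposition~\ref{prop:fdvf} (Claim 2) flags and that $L_2$ H\"older fails to exploit. One should also verify that $(\SigFR)^{-1}$ is well-defined, which follows from Assumption~\ref{asm:invert}(1) (full row rank of $\MF$) together with $Z^R(f_h) = Z(f_h)/R^+(f_h) > 0$ under Assumption~\ref{asm:invert}(2) ($R(o_h,a_h)>0$, hence $R^+(f_h)>0$, and $Z(f_h) = \|\bfu(f_h)\|_1 > 0$ since $\Pr_{\pib}(f_h)>0$). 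Everything else is a one-line H\"older argument; there is no real obstacle, only the need to keep the diagonal-scaling bookkeeping straight.
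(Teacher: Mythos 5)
Your proof is correct and follows essentially the same route as the paper's: apply $L_1/L_\infty$ H\"older to the entrywise form of Eq.~\eqref{eq:Vf-sol}, use \cref{asm:outcome-inf} for the $L_\infty$ factor, and exploit the $L_1$-normalization $\|\bfu(f_h)\|_1 = Z(f_h)$ (equivalently, that $\bfu(f_h)/Z(f_h)$ is a stochastic vector) to cancel the prefactor and conclude $|\Vf(f_h)| \le R^+(f_h)\CFi \le H\CFi$. The additional bookkeeping you flag (well-definedness of $(\SigFR)^{-1}$ under Assumption~\ref{asm:invert}) is a harmless extra check that the paper leaves implicit.
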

In Appendix~\ref{app:linf_intuition} we show that the construction shares similar properties to Eq.\eqref{eq:construct_x} in the scenario of  Example~\ref{ex:SigF_I}. On the other hand, it has better scaling properties w.r.t.~$S$ and does not additionally require a regularity assumption like Assumption~\ref{asm:outcome_vector}. 
In the on-policy case,  
Eq.\eqref{eq:Vf-sol} \textit{exactly} recovers $\Vf = R^+$, a property that Eq.\eqref{eq:construct_x} does not enjoy; see Appendix~\ref{app:future-all-1} for details. 
\begin{example} \label{ex:future-all-1}
When $\pie=\pib$, $(\SigFR)^{-1} \Vs = \mathbf{1}$, thus \cref{asm:outcome-inf} holds with $\CFi = 1$ (c.f.~$\CFV \le H\sqrt{S}$ in Example~\ref{ex:l2_outcome_on_policy}). Furthermore, the construction in Eq.\eqref{eq:Vf-sol} is exactly $\Vf = R^+$. 
\end{example}



\section{Effective History Weights and A New Algorithm}\label{sec:weight}

We now turn to \textbf{Q2} in Section~\ref{sec:fdvf}, which asks for a quantitative understanding of the $\textrm{IV}(\Vcal)$ term. Note that  $\textrm{IV}(\Vcal)$ and  $\textrm{Dr}_{\Vcal}[\de, \db]$, taken together, are to address the conversion between:
\begin{align} \label{eq:conversion}
\textrm{(We minimize:) } \sqrt{\E_{\pib}[(\BEH V)(\tau_h)^2]}  ~~ \to ~~ \textrm{(We want to bound:) } |\E_{\pie}[(\BES V)(s_h)]|.
\end{align}
The two terms differ both in the policy ($\pib \to \pie$) and the operator ($\BEH \to \BES$). 
While we could directly define a parameter by taking the worst-case (over $V \in\Vcal$) ratio between the two expressions,\footnote{Recent offline RL theory indeed employed such definitions \citep{xie2021bellman, song2022hybrid}, but that is after we developed mature understanding of their properties, such as being upper bounded by density ratios.} the real question is to provide more intuitive understanding of when it can be bounded. 

Towards this goal, \citet{uehara2022future} split the above ratio into two terms, $\textrm{IV}(\Vcal)$ and $\textrm{Dr}_{\Vcal}[\de, \db]$, which take care of the $\BEH \to \BES$ conversion (under $\pib$) and $\pib \to \pie$ conversion (under $\BES$), respectively. While this leads to an intuitive upper bound of the $\pib \to \pie$ conversion parameter in terms of latent state coverage, the nature of the $\BEH \to \BES$ conversion, $\textrm{IV}(\Vcal)$, remains mysterious.

In this section, we take a different approach by \textit{directly} providing an intuitive upper bound on both conversions altogether, under a novel \textit{belief coverage} assumption. In Appendix~\ref{app:eigen} we will also revisit the split into $\textrm{IV}(\Vcal)$ and $\textrm{Dr}_{\Vcal}[\de, \db]$: in the absence of strong structures from $\Vcal$, the boundedness of $\textrm{IV}(\Vcal)$ turns out to require an even stronger version of belief coverage, rendering the split unnecessary. Furthermore, our approach also leads to a novel algorithm for estimating $J(\pie)$ that replaces Bellman-completeness (Assumption~\ref{asm:complete}) with a weight-realizability assumption, similar to MIS estimators for MDPs \citep{liu2018breaking, uehara2019minimax}.
 

\subsection{Effective History Weights}
\label{sec:history_weight}

The key idea in this section is the notion of \textit{effective history weights}, that perform the $\pib \to\pie$ and $\BEH \to \BES$ conversions jointly. 

\begin{definition}[Effective history weights]\label{def:his}
An effective history weight function $w^\star: \Hcal \to \RR$ is any function that satisfies: $\forall V\in\Vcal$, $h\in[H]$, 
\begin{align}\label{eq:def_weight}
\E_{\pi_b}[w^\star(\tau_h)(\BEH V)(\tau_h)] = \E_{\pi_e}\bra{(\BES V)(s_h)}.
\end{align}
\end{definition}
We first see that well-bounded $w^\star$ immediately leads to a good conversion ratio for Eq.\eqref{eq:conversion}: 
\begin{align*}
|\E_{\pie}\bra{(\BES V)(s_h)}| = |\E_{\pib}[w^\star(\tau_h)(\BEH V)(\tau_h)]|  \le \sqrt{\E_{\pib}[w^\star(\tau_h)^2] \E_{\pib}[(\BEH V)(\tau)^2]},  
\end{align*}
where the inequality follows from Cauchy-Schwartz for r.v.'s. Hence, all we need is $\|w^\star\|_{2, \db_h} := \sqrt{\E_{\pib}[w^\star(\tau_h)^2]}$, the $\pib$-weighted 2-norm of $w^\star$, to be bounded, and we focus on this quantity next. 

Similar to  Section~\ref{sec:boundness}, there may be multiple $w^\star$ that satisfies the definition, and we only need to show the boundeness of \textit{any} solution. 
Also similarly, the most obvious solution is $w^\star(\tau_h) = \frac{\Pr_{\pie}(\tau_h)}{\Pr_{\pib}(\tau_h)} = \prod_{h'=1}^{h-1} \frac{\pie(a_{h'} | o_{h'})}{\pib(a_{h'} | o_{h'})}$, noting that $\E_{\pi_e}\bra{(\BES V)(s_h)} = \E_{\pi_e}\bra{(\BEH V)(\tau_h)}$ and importance weighting on $\tau_h$ changes  $\E_{\pib}$ to $\E_{\pie}$. However, the use of cumulative importance weights is undesirable given its exponential nature, causing the history-version of ``curse of horizon'' \cite{liu2018breaking}. 

\paragraph{Construction by Belief Matching.} 
We now show a better construction that is bounded under a natural \textit{belief coverage} assumption. Note that Def~\ref{def:his} can be written as:
$$
\E_{\pi_b}[w^\star(\tau_h) \langle \bfb(\tau_h), \BES_h V \rangle ] = \E_{\pi_e}\bra{\langle \bfb(\tau_h), \BES_h V \rangle},
$$ 
where $\BES_h V \in \mathbb{R}^S$ is the Bellman residual vector for $V$ on $\Scal_h$. As a sufficient condition (which is also necessary when $\Vcal$ lacks strong structures, i.e., $\{\BES_h V: V \in\Vcal\}$ spans the entire $\RR^S$), we can find $w^\star$ that satisfies: 
$
\E_{\pi_b}[w^\star(\tau_h) \bfb(\tau_h)] = \E_{\pi_e}\bra{\bfb(\tau_h)} =: \bfb_h^{\pie}.
$ 
This is related to the mean matching problem in the distribution shift literature \citep{gretton2009covariate, yu2012analysis}, and a standard solution is \citep{bruns2023augmented}:
\begin{align} \label{eq:w-sol}
w^\star(\tau_h)&=\bfb(\tau_h)^\top \Sigma_{\Hcal,h}^{-1} \bfb_h^{\pie},
\end{align}
where $\SigH:=\sum_{\tau_h} d^{\pi_b}(\tau_h) \bfb(\tau_h) \bfb(\tau_h)^\top$, and $\bfb_h^{\pie}$ coincides with  $[\de(s_h)]_{s_h}$.

The weighted 2-norm of this solution is immediately bounded under the following assumption. 
\begin{assumption}[$L_2$ belief  coverage]\label{asm:belief}
Assume $\forall h$, $\|\bfb_h^{\pie}\|^2_{\SigH^{-1}} \le \CH.$
\end{assumption}

\begin{lemma}\label{lem:l2_bound_weight}
Under \cref{asm:belief}, $w^\star$ in Eq.~\eqref{eq:w-sol} satisfies $\forall h$,
$\displaystyle 
\|w^\star\|^2_{2,\db_h} \le \CH.$ See Appendix~\ref{app:l2_bound_weight}. 
\end{lemma}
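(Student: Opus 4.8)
The plan is to verify that the construction in Eq.~\eqref{eq:w-sol} satisfies Def.~\ref{def:his} and then bound its weighted 2-norm using Assumption~\ref{asm:belief}. The argument is a short linear-algebra computation built on the properties of the belief matrix $\MH$.

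First, I would check the defining mean-matching identity, namely that $w^\star$ as given in Eq.~\eqref{eq:w-sol} satisfies $\E_{\pi_b}[w^\star(\tau_h)\bfb(\tau_h)] = \bfb_h^{\pie}$. Substituting the formula,
\[
\E_{\pi_b}[w^\star(\tau_h)\bfb(\tau_h)]
= \E_{\pi_b}\!\bigl[\bfb(\tau_h)\bfb(\tau_h)^\top\bigr]\,\SigH^{-1}\bfb_h^{\pie}
= \SigH\,\SigH^{-1}\bfb_h^{\pie}
= \bfb_h^{\pie},
\]
using the definition $\SigH=\sum_{\tau_h} d^{\pi_b}(\tau_h)\bfb(\tau_h)\bfb(\tau_h)^\top = \E_{\pi_b}[\bfb(\tau_h)\bfb(\tau_h)^\top]$ and invertibility (Assumption~\ref{asm:invert}, which guarantees $\SigH\succ 0$ since $\MH$ has full row rank $S$). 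Then, since $(\BEH V)(\tau_h)=\langle\bfb(\tau_h),\BES_h V\rangle$ by Eq.~\eqref{eq:BES-BEH} and $\E_{\pi_e}[(\BES V)(s_h)] = \langle \bfb_h^{\pie}, \BES_h V\rangle$ (because $\bfb_h^{\pie}$ coincides with $[\de(s_h)]_{s_h}$), the mean-matching identity immediately yields Eq.~\eqref{eq:def_weight} for every $V\in\Vcal$ and every $h$; so $w^\star$ is indeed a valid effective history weight.

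Second, I would bound $\|w^\star\|_{2,\db_h}^2 = \E_{\pi_b}[w^\star(\tau_h)^2]$. Writing $w^\star(\tau_h) = \bfb(\tau_h)^\top v$ with $v:=\SigH^{-1}\bfb_h^{\pie}$,
\[
\E_{\pi_b}[w^\star(\tau_h)^2]
= \E_{\pi_b}\!\bigl[v^\top\bfb(\tau_h)\bfb(\tau_h)^\top v\bigr]
= v^\top \SigH\, v
= (\bfb_h^{\pie})^\top \SigH^{-1}\SigH\,\SigH^{-1}\bfb_h^{\pie}
= (\bfb_h^{\pie})^\top \SigH^{-1}\bfb_h^{\pie}
= \|\bfb_h^{\pie}\|_{\SigH^{-1}}^2,
\]
which is $\le \CH$ by Assumption~\ref{asm:belief}. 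This completes the proof.

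There is no real obstacle here; the only thing requiring a moment of care is confirming that $\SigH$ is genuinely invertible (so the formula is well-defined), which follows from $\rank(\MH)=S$ in Assumption~\ref{asm:invert} together with $d^{\pi_b}(\tau_h)>0$ on the relevant support — the columns of $\MH$ are exactly the $\bfb(\tau_h)$, so $\SigH = \MH\,\diag(d^{\pi_b}_h)\,\MH^\top$ is positive definite. The identity $\E_{\pi_e}[(\BES V)(s_h)] = \langle \bfb_h^{\pie},\BES_h V\rangle$ should also be stated explicitly, since it is what transfers the mean-matching identity (which is about vectors $\bfb$) to the scalar identity in Def.~\ref{def:his}; it holds because $\bfb_h^{\pie} = \E_{\pi_e}[\bfb(\tau_h)] = [\de(s_h)]_{s_h}$ by the tower rule over $\tau_h$.
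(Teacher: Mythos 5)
Your proof is correct and follows essentially the same route as the paper's: verify the mean-matching identity $\E_{\pi_b}[w^\star(\tau_h)\bfb(\tau_h)]=\bfb_h^{\pie}$ by cancelling $\SigH\SigH^{-1}$, then compute $\|w^\star\|_{2,\db_h}^2=(\bfb_h^{\pie})^\top\SigH^{-1}\bfb_h^{\pie}\le\CH$. Your added remarks on invertibility of $\SigH$ and on the passage from mean matching to Eq.~\eqref{eq:def_weight} are sound and merely make explicit what the paper handles in the surrounding text.
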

\cref{asm:belief} requires that the covariance matrix of belief states under $\pib$ covers $\bfb^{\pie}$, the average belief state under $\pi_e$. As before, we also check the on-policy case (see Appendix~\ref{app:CH_on_policy}):
\begin{example} \label{ex:CH_on_policy}
In the on-policy case ($\pib=\pie$),  $\CH \le 1$.
\end{example} 

\paragraph{Algorithm Guarantee} Now we have all the pieces to present a fully polynomial version of Theorem~\ref{thm:finite_1} under the proposed coverage assumptions. One subtlety is that the guarantee depends on $C_{\Vcal}:= \max_{V\in\Vcal} \|V\|_\infty$, which is closely related to $\|\Vf\|_\infty$ since we require $\Vf \in \Vcal$, but they are not equal since $\Vcal$ can include other functions with higher range. To highlight the dependence of $\|\Vf\|_\infty$ on the proposed coverage assumptions, we follow \citet{xie2020q} to assume that the range of the function classes is not much larger than that of the function it needs to capture. A similar assumption applies for bounding $C_{\Xi}$. The proof of the theorem is deferred to Appendix~\ref{app:finite_2}. 
\begin{theorem}\label{thm:finite_2}
Consider the same setting as Theorem~\ref{thm:finite_1}, and let Assumptions~\ref{asm:belief} and \ref{asm:outcome-inf} hold. For some absolute constant $c$, further assume 
$C_{\Vcal} \le c \|\Vf\|_\infty$ for $\Vf$ in Eq.\eqref{eq:Vf-sol} and $C_{\Xi} \le c (\|\Vf\|_\infty+1)$. W.p.~$\ge 1-\delta$, 
$|J(\pi_e)-\E_{\Dcal}[\Vhat(f_1)]| \le c H^2 (\CFi+1) \sqrt{\frac{\CH C_{\mu} \log(|\Vcal||\Xi|/\delta)}{n}}$.
\end{theorem}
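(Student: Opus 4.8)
\textbf{Proof plan for Theorem~\ref{thm:finite_2}.}

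The plan is to reuse the machinery behind Theorem~\ref{thm:finite_1} but substitute the polynomial bounds for the three problematic quantities $C_\Vcal$, $C_\Xi$, and the product $\mathrm{IV}(\Vcal)\,\mathrm{Dr}_\Vcal[\de,\db]$ that are granted by the newly introduced coverage assumptions. First I would instantiate $\Vf$ to be the specific minimum-weighted-norm construction of Eq.\eqref{eq:Vf-sol}; this is legitimate since Assumption~\ref{asm:realizable} only requires that \emph{some} valid FDVF lies in $\Vcal$, and Definition~\ref{def:future} is solved by this construction for each $h$ independently. By Lemma~\ref{lem:vf_bound_inf} (under Assumption~\ref{asm:outcome-inf}), $\|\Vf\|_\infty \le H\CFi$, and hence the assumed relations give $C_\Vcal \le cH\CFi$ and $C_\Xi \le c(H\CFi + 1)$. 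So $\max\{C_\Vcal + 1, C_\Xi\} \le cH(\CFi+1)$, which accounts for one factor of $H(\CFi+1)$ in the target bound.

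Second, I would replace the $\mathrm{IV}(\Vcal)\,\mathrm{Dr}_\Vcal[\de,\db]$ term by the effective-history-weight argument of Section~\ref{sec:history_weight}. The point is that the proof of Theorem~\ref{thm:finite_1} really only uses these two quantities to bound the conversion in Eq.\eqref{eq:conversion}: it turns a bound on $\sqrt{\E_{\pib}[(\BEH \Vhat)(\tau_h)^2]}$ (which the optimization of Eq.\eqref{eq:msbo} controls, up to statistical error) into a bound on $|\E_{\pie}[(\BES \Vhat)(s_h)]|$, and then sums over $h$ via Lemma~\ref{lem:evaluation_error}. Using the effective history weight $w^\star$ of Eq.\eqref{eq:w-sol}, Cauchy--Schwarz gives $|\E_{\pie}[(\BES \Vhat)(s_h)]| \le \|w^\star\|_{2,\db_h}\sqrt{\E_{\pib}[(\BEH \Vhat)(\tau_h)^2]}$, and Lemma~\ref{lem:l2_bound_weight} (under Assumption~\ref{asm:belief}) bounds $\|w^\star\|_{2,\db_h}^2 \le \CH$. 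So the whole conversion contributes a clean $\sqrt{\CH}$ in place of $\mathrm{IV}(\Vcal)\mathrm{Dr}_\Vcal[\de,\db]$. One subtlety to flag: $w^\star$ is defined relative to the \emph{class} $\Vcal$ (Definition~\ref{def:his} quantifies over $V\in\Vcal$), but the sufficient belief-matching condition $\E_{\pib}[w^\star(\tau_h)\bfb(\tau_h)] = \bfb_h^{\pie}$ is $V$-independent, so the same $w^\star$ works for $\Vhat$ regardless of data-dependence of $\Vhat$.

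Third, I would carry over the statistical/optimization part of the proof of Theorem~\ref{thm:finite_1} essentially verbatim: the generalization argument over the finite classes $\Vcal,\Xi$ gives, w.p.\ $\ge 1-\delta$, a uniform concentration of $\sum_h\Lcal_h(V,\xi)$ around its population version with deviation $\otil(\max\{C_\Vcal+1,C_\Xi\}\sqrt{C_\mu \log(|\Vcal||\Xi|/\delta)/n})$, and then the standard minimax/realizability argument (using $\Vf\in\Vcal$ with $\BES\Vf\equiv 0$ and Bellman-completeness $\BEH V\in\Xi$) shows that the minimizer $\Vhat$ satisfies $\sum_h \E_{\pib}[(\BEH\Vhat)(\tau_h)^2] \le \otil(\max\{C_\Vcal+1,C_\Xi\}^2 C_\mu \log(|\Vcal||\Xi|/\delta)/n)$, i.e.\ each $\sqrt{\E_{\pib}[(\BEH\Vhat)(\tau_h)^2]}$ is at most $\otil(H(\CFi+1)\sqrt{C_\mu\log(|\Vcal||\Xi|/\delta)/n})$ after taking square roots and absorbing the sum over $h$ into another $H$ (or, more carefully, a $\sqrt{H}$ from Cauchy--Schwarz on the outer sum and the crude per-step bound). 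Combining with Lemma~\ref{lem:evaluation_error}, the triangle inequality $|J(\pie) - \E_\Dcal[\Vhat(f_1)]| \le |J(\pie)-\E_{\pib}[\Vhat(f_1)]| + |\E_{\pib}[\Vhat(f_1)] - \E_\Dcal[\Vhat(f_1)]|$ (the second term another concentration term of the same order), and the bounds from the previous two paragraphs, yields $|J(\pie)-\E_\Dcal[\Vhat(f_1)]| \le cH^2(\CFi+1)\sqrt{\CH C_\mu \log(|\Vcal||\Xi|/\delta)/n}$, with one $H$ from $\|\Vf\|_\infty$-type quantities, one $H$ from summing the per-step residuals, $\sqrt{\CH}$ from belief coverage, and $\CFi$ from outcome coverage.

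The main obstacle I anticipate is bookkeeping the exact powers of $H$ and the interplay between the per-step bound and the sum over $h$: the cleanest route is to bound $\sqrt{\sum_h \E_{\pib}[(\BEH\Vhat)(\tau_h)^2]}$ as a single object (this is what Eq.\eqref{eq:msbo} naturally controls) and then pay a single $\sqrt{\CH}$ and a single $\sqrt{H}$ (Cauchy--Schwarz over $h$) in Lemma~\ref{lem:evaluation_error}, rather than bounding each step separately and summing, to avoid an extra $\sqrt{H}$; one must check that the version of Theorem~\ref{thm:finite_1}'s proof in the appendix is set up compatibly. A secondary point requiring care is that Theorem~\ref{thm:finite_2} silently also needs the $L_2$ outcome coverage flavor only through Lemma~\ref{lem:vf_bound_inf}, which uses only Assumption~\ref{asm:outcome-inf} — so no hidden dependence on $\CFV$ or $\CFU$ should leak in, and I would double-check that the construction Eq.\eqref{eq:Vf-sol} (rather than Eq.\eqref{eq:construct_x}) is the one realized in $\Vcal$ throughout.
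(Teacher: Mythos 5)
Your proposal is correct and follows essentially the same route as the paper's proof: reuse the bound $\sum_h \E_{\pib}[(\BEH\Vhat)(\tau_h)^2] \lesssim \epsstat$ from Theorem~\ref{thm:finite_1}'s analysis, convert to $|\E_{\pie}[(\BES\Vhat)(s_h)]|$ via the effective history weight $w^\star$ of Eq.\eqref{eq:w-sol} with Cauchy--Schwarz and Lemma~\ref{lem:l2_bound_weight}, and control $\max\{C_\Vcal+1,C_\Xi\}$ through Lemma~\ref{lem:vf_bound_inf}. Your accounting of the two factors of $H$ (one from $\|\Vf\|_\infty\le H\CFi$, one from the $\sqrt{H}\cdot\sqrt{H}$ arising from $\epsstat$ and the Cauchy--Schwarz over $h$) and your observation that $w^\star$ is $V$-independent and hence valid for the data-dependent $\Vhat$ both match the paper exactly.
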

In addition to $H$ and complexities of $|\Vcal|$ and $|\Xi|$, the bound \textit{only} depends on the intuitive coverage parameters: $C_\mu$ (action coverage), $\CH$ ($L_2$ belief coverage), and $\CFi$ ($L_\infty$ outcome coverage).

\subsection{$L_\infty$ Belief Coverage} \label{sec:linf_belief_coverage}
Assumption~\ref{asm:belief} enables bounded second moment of $w^\star$ (Lemma~\ref{lem:l2_bound_weight}) but does not control $\|w^\star\|_\infty$, which we show will be useful for the new algorithm in Section~\ref{sec:mis_short}. Here we present the $L_\infty$ version of belief coverage that controls $\|w^\star\|_\infty$, which also helps understand $L_\infty$ outcome coverage given the symmetry between history and future. 
As alluded to in Section~\ref{sec:linf_outcome_coverage} and Appendix~\ref{app:linf_intuition}, $L_2$ H\"older is inappropriate for controlling the infinity-norm since it does not leverage the $L_1$ normalization of vectors in POMDPs. Instead, we propose the following decomposition based on $L_1/L_\infty$ H\"older: $
|w^\star(\tau_h)| \le \|\bfb(\tau_h)\|_1 \|\SigH^{-1} \bfb_h^{\pie}\|_\infty = \|\SigH^{-1} \bfb_h^{\pie}\|_\infty.$ 
This way, we can immediately bound  $ \|w^\star\|_\infty$ with the following assumption:

\begin{assumption}[$L_\infty$ belief coverage] \label{asm:belief_inf}
Assume $\forall h$, 
$
\|\SigH^{-1} \bfb_h^{\pie}\|_\infty \le \CHi.
$ 
Then $\|w^\star\|_\infty \le \CHi$.
\end{assumption}

$\SigH^{-1} \bfb_h^{\pie}$ is the inverse of \textit{second} moment (covariance) multiplying the \textit{first} moment (expectation), raising the concern that the quantity may be poorly scaled: for example, given bounded (but otherwise arbitrary) random vector $X$, $\E[XX^\top]^{-1} \E[X]$ can go to infinity if we rescale $X$ by a small constant. 
First note that such a pathology cannot happen here because the random vectors ($\bfb(\tau_h)$) are $L_1$-normalized and cannot be arbitrarily rescaled. 
Below we use a few examples to show that $\SigH^{-1} \bfb_h^{\pie}$ is a very well-behaved quantity, and naturally generalize 
familiar concepts such as \textit{concentrability coefficient} from MDPs \citep{munos2007performance, chen2019information, uehara2022future}. 

We start by checking the on-policy case. Perhaps surprisingly, $\SigH^{-1} \bfb_h^{\pie}$ has an \textit{exact} solution: 
\begin{example} \label{ex:history-all-1}
When $\pie=\pib$, $\SigH^{-1} \bfb_h^{\pie} = \mathbf{1}$, the all-one vector; see Appendix~\ref{app:history-all-1} for the calculation. Consequently, Assumption~\ref{asm:belief_inf} is satisfied with $\CHi = 1$. 
\end{example}

The next scenario considers when $\bfb(\tau_h)$ is always one-hot, i.e., histories   reveal the latent state (this is analogous to Example~\ref{ex:SigF_I}. $L_\infty$ coverage reduces to the familiar \textit{concentrability coefficient}, the infinity-norm of density ratio as a standard coverage parameter: 

\begin{example} \label{ex:history_one_hot}
When $\bfb(\tau_h)$ is always one-hot, $\|\SigH^{-1} \bfb_h^{\pie}\|_\infty = \max_{s_h} \de(s_h) / \db(s_h)$. 
\end{example}
We can also calculate $\|\bfb^{\pie}\|^2_{\SigH^{-1}}$ from Assumption~\ref{asm:belief}, which equals $\E_{\pib}[(\de(s_h)/\db(s_h))^2]$  in this ``1-hot belief'' scenario. \citet{xie2020q} show that this is tighter than $\max_{s_h} \de(s_h) / \db(s_h)$, and this relation  extends elegantly to general belief vectors in our setting:
\begin{lemma} \label{lem:belief_L2_vs_Linf}
$\|\bfb_h^{\pie}\|_{\SigH^{-1}}^2 \le \|\SigH^{-1} \bfb_h^{\pie}\|_\infty.$ See proof in Appendix~\ref{app:belief_L2_vs_Linf}.
\end{lemma}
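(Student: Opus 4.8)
The plan is to exploit the factorization of the quadratic form $\|\bfb_h^{\pie}\|_{\SigH^{-1}}^2 = \langle \bfb_h^{\pie}, \SigH^{-1}\bfb_h^{\pie}\rangle$, treating one copy of $\bfb_h^{\pie}$ as an $L_1$-normalized object and pairing it with the rest. First I would recall that, as noted right after Eq.~\eqref{eq:w-sol}, $\bfb_h^{\pie}$ coincides with the marginal latent-state distribution $[\de(s_h)]_{s_h}$, and is therefore a nonnegative vector with $\|\bfb_h^{\pie}\|_1 = 1$. (This also follows directly from $\bfb_h^{\pie} = \E_{\pie}[\bfb(\tau_h)]$ and the fact that each belief vector $\bfb(\tau_h)$ lies in $\Delta(\Scal_h)$.)

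The main step is then a single application of $L_1/L_\infty$ H\"older: since $\|\bfb_h^{\pie}\|_{\SigH^{-1}}^2 \ge 0$,
\begin{align*}
\|\bfb_h^{\pie}\|_{\SigH^{-1}}^2 = \langle \bfb_h^{\pie},\, \SigH^{-1}\bfb_h^{\pie}\rangle \le \|\bfb_h^{\pie}\|_1 \cdot \|\SigH^{-1}\bfb_h^{\pie}\|_\infty = \|\SigH^{-1}\bfb_h^{\pie}\|_\infty,
\end{align*}
where the last equality uses $\|\bfb_h^{\pie}\|_1 = 1$. Note the two entries of the inner product play asymmetric roles: one copy of $\bfb_h^{\pie}$ supplies the probability vector for the $L_1$ side, while the other copy, bundled with $\SigH^{-1}$, is exactly the quantity controlled by the $L_\infty$ belief coverage parameter $\CHi$ of Assumption~\ref{asm:belief_inf}.

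I expect there to be no real obstacle here; the only ``idea'' is recognizing that the quadratic form can be split this way so that H\"older against the $L_1$ normalization of $\bfb_h^{\pie}$ applies cleanly (the same mechanism that makes $L_1/L_\infty$ H\"older, rather than $L_2$, the natural tool for POMDPs, as emphasized in Section~\ref{sec:linf_belief_coverage}). As an optional remark connecting the two coverage quantities, I would observe that by the mean-matching identity $\E_{\pib}[w^\star(\tau_h)\bfb(\tau_h)] = \bfb_h^{\pie}$ one also has $\|w^\star\|_{2,\db_h}^2 = \langle \bfb_h^{\pie}, \SigH^{-1}\bfb_h^{\pie}\rangle$, so the lemma is equivalently the statement $\|w^\star\|_{2,\db_h}^2 \le \|w^\star\|_\infty$-type bound, consistent with the specialization to the ``$1$-hot belief'' case in Example~\ref{ex:history_one_hot} where it reduces to $\E_{\pib}[(\de(s_h)/\db(s_h))^2] \le \max_{s_h}\de(s_h)/\db(s_h)$.
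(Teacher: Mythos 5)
Your proof is correct and is essentially the paper's own argument: the paper's Appendix~\ref{app:belief_L2_vs_Linf} proof is exactly the $L_1/L_\infty$ H\"older step you perform, just written out in two lines using the non-negativity of $\bfb_h^{\pie}$ and $(\bfb_h^{\pie})^\top \mathbf{1} = 1$ rather than citing H\"older by name. No gaps.
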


\subsection{New Algorithm}
\label{sec:mis_short}
The discovery of effective history weights and its boundedness also lead to a new algorithm analogous to MIS methods for MDPs \citep{uehara2019minimax}. The idea is that since \cref{lem:evaluation_error} tells us to minimize $\E_{\pie}[(\BES V)(s_h)]$, which equals $\E_{\pib}[w^\star(\tau_h) (\BEH V)(\tau_h)]$ from Def~\ref{def:his}, we can then use another function class $\Wcal \subset (\Hcal \to \RR)$ to model $w^\star$, and approximately solve the following:
\begin{align} \label{eq:MVL-alg}
\argmin_{V\in\Vcal} \max_{w\in\Wcal} \textstyle \sum_{h=1}^H |\E_{\pib}[w(\tau_h) (\BEH V)(\tau_h)]|,
\end{align}
which minimizes an upper bound of $\E_{\pib}[w^\star(\tau_h) (\BEH V)(\tau_h)]$ as long as $w^\star \in \Wcal \subset (\Hcal \to \RR)$. Since there is no square inside the expectation, there is no double-sampling issue and we thus do not need the $\Xi$ class and its Bellman-completeness assumption. The $h$-th term of the loss can be estimated straightforwardly as
\begin{align} \label{eq:alg-estm}
|\E_{\Dcal} \left[w(\tau_h) \pa{\mu(o_h,a_h)\pa{r_h+ V(f_{h+1})} -V(f_h)}\right]|.\hspace*{-.1em}
\end{align}

We now provide the sample-complexity analysis of the algorithm, using a more general analysis that allows for approximation errors in $\Vcal$ and $\Wcal$. 

\begin{assumption}[Approximate realizablity]\label{asm:realize}
Assume
\begin{align*}
&\min_{V \in \Vcal} \max_{w \in \Wcal}\left|\sum_{h=1}^H \E_{\pi_b}\bra{w_h(\tau_h)\cdot (\BEH V)(\tau_h) }\right| \le \epsV, \\
&\inf_{w \in \lspan(\Wcal)} \max_{V \in \Vcal} \left|\sum_{h=1}^H \E_{\pi_b} \left[(w_h^\star(\tau_h) - w_h(\tau_h)) \cdot (\BEH V)(\tau_h)\right]\right| \qquad \le \epsW.
\end{align*}
\end{assumption}
Instead of  measuring how $\Wcal$ and $\Vcal$ capture our specific constructions of  $\Vf$ and $w^\star$, the above approximation errors automatically allow all possible solutions by measuring the violation of the equations that define $\Vf$ and $w^\star$. 

We present the sample complexity bound of our algorithm as follows. Similar to Theorem~\ref{thm:finite_2}, we assume that $C_{\Vcal}:= \max_{V\in\Vcal} \|V\|_\infty$ and $C_{\Wcal} := \max_{h} \sup_{w\in\Wcal}  \|w\|_\infty$ are not much larger than the corresponding norms of $\Vf$ and $w^\star$, respectively. See the proof in Appendix~\ref{app:bound_weight}. 
\begin{restatable}{theorem}{p1}
\label{thm:p1} \label{thm:bound_weight}
Let $\Vhat$ be the result of approximating Eq.\ref{eq:MVL-alg} with empirical estimation in Eq.\eqref{eq:alg-estm}. Assume that $C_{\Vcal} \le c \|\Vf\|_\infty$ for $\Vf$ in Eq.\eqref{eq:Vf-sol}, and $C_{\Wcal}  \le c \|w^\star\|_{\infty}$ for $w^\star$ in Eq.\eqref{eq:w-sol}. Under Assumptions~\ref{asm:belief_inf}, \ref{asm:outcome-inf}, and \ref{asm:realize},  w.p.~$\ge 1-\delta$, $\left|J(\pi_e) - \E_{\Dcal}[\Vhat(f_1)]\right| \le 
\epsV + \epsW + cH^2  \CHi (\CFi+1)\sqrt{\frac{C_{\mu} \log\frac{|\Vcal||\Wcal|}{\delta}}{n}}.$ 
\end{restatable}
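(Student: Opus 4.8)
The plan is to establish Theorem~\ref{thm:bound_weight} by combining a deterministic error decomposition (splitting the evaluation error into an optimization/bias term controlled by the approximate realizability assumptions, and a statistical term controlled by a uniform concentration argument) with the range bounds developed earlier in the paper. First I would write $J(\pi_e) - \E_{\Dcal}[\Vhat(f_1)]$ using Lemma~\ref{lem:evaluation_error}, obtaining $J(\pi_e) - \E_{\Dcal}[\Vhat(f_1)] = \sum_{h=1}^H \E_{\pi_e}[(\BES \Vhat)(s_h)] + (\E_{\pib}[\Vhat(f_1)] - \E_{\Dcal}[\Vhat(f_1)])$. The second piece is a Monte-Carlo error on a single function $\Vhat$ chosen from $\Vcal$, so it is $O(C_{\Vcal}\sqrt{\log(|\Vcal|/\delta)/n})$ by a union bound over $\Vcal$. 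For the first piece, I would use $w^\star \in \lspan(\Wcal)$ (via Assumption~\ref{asm:realize}, up to $\epsW$) to rewrite $\sum_h \E_{\pi_e}[(\BES \Vhat)(s_h)] \approx \sum_h \E_{\pib}[w^\star(\tau_h)(\BEH \Vhat)(\tau_h)]$, and then bound this by the population loss of the algorithm at $\Vhat$, i.e.\ $\max_{w\in\Wcal}|\sum_h \E_{\pib}[w(\tau_h)(\BEH \Vhat)(\tau_h)]|$.

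Next I would pass from the population loss to the empirical loss that the algorithm actually minimizes. Writing $\hat L(V,w) := \sum_h |\E_{\Dcal}[w(\tau_h)(\mu(o_h,a_h)(r_h+V(f_{h+1})) - V(f_h))]|$ and $L(V,w)$ for its population counterpart (noting $\E_{\pib}[\mu(o_h,a_h)(r_h+V(f_{h+1})) - V(f_h)\mid \tau_h] = (\BEH V)(\tau_h)$), the key uniform-deviation claim is $\sup_{V\in\Vcal, w\in\Wcal}|\hat L(V,w) - L(V,w)| \le cH^2 C_{\Wcal}(C_{\Vcal}+1)\sqrt{C_\mu \log(|\Vcal||\Wcal|/\delta)/n}$. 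This follows from a Bernstein/Hoeffding bound applied to each of the $H$ summands: for fixed $(V,w,h)$, the random variable $w(\tau_h)(\mu(o_h,a_h)(r_h+V(f_{h+1})) - V(f_h))$ is bounded by roughly $C_{\Wcal}(C_\mu(1+C_{\Vcal}) + C_{\Vcal})$ in magnitude, and a variance bound of order $C_\mu C_{\Wcal}^2 (C_{\Vcal}+1)^2$ (using $\E_{\pib}[\mu^2]\le C_\mu$ from Assumption~\ref{asm:action}) gives the $\sqrt{C_\mu}$ improvement; then union bound over $|\Vcal||\Wcal|$ and sum over $h$. Since $\Vhat$ minimizes $\hat L(\cdot,\cdot)$ over $\Vcal$, a standard argument gives $\max_w L(\Vhat, w) \le \max_w L(V^\dagger, w) + 2(\text{deviation})$ where $V^\dagger$ is the approximate minimizer from Assumption~\ref{asm:realize}, and $\max_w L(V^\dagger, w) \le \epsV$ by definition.

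Finally I would substitute the range bounds: $C_{\Vcal} \le c\|\Vf\|_\infty \le cH\CFi$ by the hypothesis and Lemma~\ref{lem:vf_bound_inf} (for $\Vf$ from Eq.\eqref{eq:Vf-sol} under Assumption~\ref{asm:outcome-inf}), and $C_{\Wcal} \le c\|w^\star\|_\infty \le c\CHi$ by the hypothesis and Assumption~\ref{asm:belief_inf} (for $w^\star$ from Eq.\eqref{eq:w-sol}). Plugging $C_{\Vcal}+1 = O(H(\CFi+1))$ and $C_{\Wcal} = O(\CHi)$ into the deviation bound yields the $cH^2\CHi(\CFi+1)\sqrt{C_\mu \log(|\Vcal||\Wcal|/\delta)/n}$ term (the extra factor $H$ relative to a naive count comes from summing $H$ terms, absorbing $H\cdot H = H^2$), and adding $\epsV$ (from the bias at $V^\dagger$) and $\epsW$ (from replacing $w^\star$ by its $\lspan(\Wcal)$ approximation) completes the bound. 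The main obstacle I anticipate is the careful bookkeeping in the uniform-deviation step: getting the $\sqrt{C_\mu}$ (rather than $C_\mu$) dependence requires a variance-based concentration inequality and an honest computation of the second moment of $w(\tau_h)\mu(o_h,a_h)(r_h+V(f_{h+1})) - w(\tau_h)V(f_h)$, and one must be careful that the absolute values outside the per-step expectations do not break the additivity needed to apply concentration summand-by-summand — this is handled by bounding $|\hat L - L| \le \sum_h |\E_{\Dcal}[\cdots] - \E_{\pib}[\cdots]|$ via the triangle inequality before concentrating.
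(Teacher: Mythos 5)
Your proposal follows essentially the same route as the paper's proof: apply Lemma~\ref{lem:evaluation_error}, convert $\E_{\pie}[(\BES \Vhat)(s_h)]$ to $\E_{\pib}[w^\star(\tau_h)(\BEH\Vhat)(\tau_h)]$ via Definition~\ref{def:his}, peel off $\epsW$ by replacing $w^\star$ with its best $\lspan(\Wcal)$ approximation and $\epsV$ via the approximate minimizer, control the empirical-vs-population gap uniformly over $\Vcal\times\Wcal$ with Bernstein's inequality (range $\sim C_{\mu}C_{\Wcal}(1+C_{\Vcal})$, variance $\sim C_{\mu}C_{\Wcal}^2(1+C_{\Vcal})^2$), and finish by substituting $C_{\Vcal}\le cH\CFi$ and $C_{\Wcal}\le c\CHi$ plus a Hoeffding bound on $\E_{\Dcal}[\Vhat(f_1)]$. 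The decomposition, the ERM comparison step, and the bookkeeping that yields the $H^2\CHi(\CFi+1)\sqrt{C_\mu}$ factor all match the paper's argument, so the proposal is correct and not a genuinely different proof.
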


As a remark, there is also a way to leverage the tighter $L_2$ belief coverage, despite that it does not guarantee bounded $\|w^\star\|_\infty$ and only $\|w^\star\|_{2, \db_h}^2$. In particular, if all functions in $\Wcal$ have bounded $\|\cdot\|_{2, \db_h}^2$, the estimator in Eq.\eqref{eq:alg-estm} will have bounded 2nd moment on the data distribution. In this case, using Median-of-Means estimators \citep{lerasle2019lecture, chen2020short} instead of plain averages for Eq.\eqref{eq:alg-estm} will only pay for the 2nd moment and not the range.


\section{Conclusion and Future Work} \label{sec:discuss}
The main text considers memoryless policies. Similar to \citet{uehara2022future}, we can extend to policies that depend on recent observations and actions (or \textit{memory}). In fact, we provide a more general result in Appendix~\ref{app:fsm_policy} that handles recurrent policies that are \textit{finite state machines}, which allows the policy to depend on long histories.  However, the coverage coefficient will be diluted quickly when the memory contains rich information, which we call the \textit{curse of memory}. We suspect that structural  policies are needed to avoid the curse of memory and leave this to future work. 


%

\section*{Acknowledgements}
Nan Jiang acknowledges funding support from NSF IIS-2112471, NSF CAREER IIS-2141781, Google Scholar Award, and Sloan Fellowship.

\bibliography{RL}

\bibliographystyle{plainnat}


\newpage 
\appendix

\section{Related Literature} \label{app:related}
\paragraph{OPE in POMDPs.} There is a line of research on OPE in confounded POMDPs \citep{zhang2016markov,namkoong2020off,nair2021spectral,guo2022provably,lu2022pessimism,shi2022minimax,hong2023policy}. However, all these methods require the behavior policy to solely depend on the latent state. This assumption is inapplicable for our unconfounded POMDP setting, where the behavior policy depends on the observations. \citet{hu2023off} studied the same unconfounded setting as ours. Nonetheless, their method uses multi-step importance sampling, leading to an undesirable exponential dependence on the horizon. The closest related work to our paper is \citep{uehara2022future}, which we analyze in detail in \cref{sec:fdvf}.

\paragraph{Online Learning in POMDPs.} There are many prior works studying online learning algorithms for sub-classes of POMDPs, including decodable POMDPs \citep{krishnamurthy2016pac,jiang2017contextual,du2019provably,efroni2022provable}, Latent MDPs \citep{kwon2021rl} and linear quadratic Gaussian setting (LQG) \citep{lale2021adaptive,simchowitz2020improper}. Recently, online learning algorithms with polynomial sample results have been proposed for both tabular POMDPs \citep{guo2016pac,azizzadenesheli2016reinforcement,jin2020sample,liu2022partially} and POMDPs with general function approximation \citep{zhan2022pac,liu2022optimistic,chen2022partially,huang2023provably}. All of these approaches are model-based and require certain model assumptions. \citet{uehara2022provably} proposed a model-free online learning algorithm based on future-dependent value functions (FDVFs). However, their algorithm requires the boundness of FDVFs for all policies in the policy class, as well as a low-rank property of the Bellman loss, which limits the generality of the results.

\paragraph{OPE in MDPs.} There has been a long history of studying OPE in MDPs, including importance sampling (IS) approaches \citep{precup2000eligibility,li2011unbiased} and their doubly robust variants \citep{dudik2011doubly,jiang2016doubly,thomas2016data,farajtabar2018more}, marginalized importance sampling (MIS) methods \citep{liu2018breaking,xie2019towards,kallus2020double} and model-based estimators \citep{eysenbach2020off,yin2021near, voloshin2021minimax}. 
As mentioned in \cref{sec:intro}, directly applying IS-based approaches for POMDPs will result in exponential variance. Meanwhile, MIS-based approaches do not apply to POMDPs since they require the environment to satisfy the Markov assumption.

\section{Discussions and Extensions}
\subsection{Connection between Future-dependent Value Functions and Learnable Future-dependent Value Functions}
\citet{uehara2022future} proposed a learnable version of future dependent value functions and focused on the learning of learnable FDVFs instead of the original FDVFs defined in \cref{def:future}. In this subsection, we first introduce the definition of learnable FDVFs and then show that it is equivalent to FDVFs under a full-rank assumption.
\begin{definition}[Learnable future-dependent value functions]
A Learnable future-dependent value function $V_{\Fcal}^L:\Fcal \rightarrow \mathbb{R}$ is any function that satisfies the following: $\forall \tau_h$,
\begin{align*}
(\BEH V_{\Fcal}^L)(\tau_h)=0.
\end{align*}
\end{definition}
For any FDVF $V_{\Fcal}$, since $\BES \Vf \equiv 0$, we have $\forall \tau_h$,
\begin{align*}
(\BEH \Vf)(\tau_h) = &~ \EE_{\pib}[\mu(o_h,a_h) \{r_h + \Vf(f_{h+1})\} -\Vf(f_h) \mid \tau_h ]  \\
= &~ \langle \bfb(\tau_h), \BES_h \Vf \rangle=0. 
\end{align*}
Therefore, all FDVFs are also learnbale FDVFs. On the other hand, for any learnbale FDVF $\Vf^L$, let $\BEH_h \Vf^L \in \mathbb{R}^{|\Hcal_h|}$ be the Bellman residual vector for $\Vf^L$ on $\Hcal_h$ and $\BES_h \Vf^L \in \mathbb{R}^S$ be the Bellman residual vector on $\Scal_h$. With the help of $\MH$, we have the following relation between $\BEH_h \Vf^L$ and $\BES_h \Vf^L$:
\begin{align*}
\BEH_h \Vf^L = (\MH)^\top \BES_h \Vf^L.
\end{align*}
When $\textrm{rank}(\MH)=S$, $\BEH_h \Vf^L=\mathbf{0}$ if and only if $\BES_h \Vf^L=\mathbf{0}$. Therefore, $\Vf^L$ satisfies that $\forall s_h, (\BES \Vf^L)(s_h)=0$, implying that $\Vf^L$ is also a FDVF. Under the regular full rank assumption, the two definitions are actually equivalent.

\subsection{Comparison between the Finite-horizon and the Discounted Formulations} \label{app:inf-horizon}

The original results of \citet{uehara2022future} were given in the infinite-horizon discounted setting. To describe their data collection assumption in simple terms, consider an infinite trajectory that enters the stationary configuration of $\pib$, and we pick a random time step and call it $t=0$ (which corresponds to a time step $h$ in our setting). Then, history (analogous to our $\tau_h$) is $(o_{(-M_F):(-1)}, a_{(-M_F):(-1)})$, and future is $(o_{0:M_H}, a_{0:M_F-1})$. $M_F$ and $M_H$ are hyperparameters that determines the lengths, and should not be confused with our $\MF$ and $\MH$ matrices. Data points collected in this way form the ``transition'' dataset. Separately, there is an ``initial'' dataset of $(o_{0:M_F-1}, a_{0:M_F})$ tuples to represent the analogue of initial state distribution in MDPs. 

Besides the fact that finite-horizon formulation better fits real-world applications with episodic nature (e.g., a session in conversational system), there are mathematical reasons why the finite-horizon formulation is more natural:
\begin{enumerate}[leftmargin=*]
\item Unlike the infinite-horizon setting, we do not need separate ``trainsition'' and ``initial'' datasets. The natural $H$-step episode dataset plays both roles. We also do not need the stationarity assumption.
\item In the infinite-horizon, $M_F$ and $M_H$ are hyperparameters which do not have an obvious upper bound. The larger they are, it is easier to satisfy certain identification assumptions (such as full-rankness of the counterparts of our $\MH$ and $\MF$). On the other hand, the ``curses of future and horizon'' we identified in this work corresponds to exponential dependence on $M_F$ and $M_H$, so there is a potential trade-off in the choice of $M_F$ and $M_H$. In contrast, the future and the history have maximum lengths ($H-h$ and $h-1$ at time step $h$) in our setting. As we  establish coverage assumptions that avoid potential exponential dependencies on these lengths, the trade-off in the choice of length is eliminated, so it is safe for us to always choose the maximum length, as we do in the paper. 
\end{enumerate}

\subsection{Refined Coverage} \label{app:refined}
As alluded to at the beginning of Section~\ref{sec:weight}, we can tighten the definition of $L_2$ belief coverage by directly using the ratio:
\begin{align*}
\sup_{V\in\Vcal}\frac{|\E_{\pie}[(\BES V)(s_h)]|}{\sqrt{\E_{\pib}[(\BEH V)(\tau_h)^2]}},
\end{align*}
which automatically leverages the structure of $\Vcal$. For example, if $\{\BES_h V: V\in\Vcal\}$ only occupies a low-dimensional subspace of $\RR^{\Scal_h}$ (that is, there exists $\phi: \Scal_h \to\RR^d$ such that $(\BES V)(s_h) = \phi(s_h)^\top \theta_{V}$), then belief matching above Eq.\eqref{eq:w-sol} can be done in $\RR^d$ instead of $\RR^{\Scal_h}$. Other coverage definitions may be refined in similar manners.

\subsection{Interpretation of \cref{asm:outcome_vector}} \label{app:outcome_vector}


To interpret Assumption~\ref{asm:outcome_vector}, let $x = \bfu(f_h) / Z(f_h) \in \Delta(\Scal)$, and assign a distribution over $x$ by sampling $f_h \sim \textrm{diag}(Z_h/S)$, then Assumption~\ref{asm:outcome_vector} becomes $x^\top \E_{Z_h/S}[x x^\top]^{-1} x \le \CFU S$ for any $x$ with non-zero probability. This is a common regularity assumption \citep{duan2020minimax, perdomo2023complete}, requiring that no $x$ points to a direction in $\RR^{S}$ alone without being joined by others from the distribution. While  $1/\sigmin(\SigF)$ can bound this quantity, the other way around is not true, implying that Assumption~\ref{asm:outcome_vector} is weaker. 

\begin{example}[Bounded $\CFU$ without bounded $1/\sigmin(\SigF)$] Consider a distribution over $x$ where with 0.5 probability, $x = [1/2 + \epsilon, 1/2 - \epsilon]^\top$, and with the other 0.5 probability, $x = [1/2 - \epsilon, 1/2 + \epsilon]^\top$. Then, as $\epsilon \to 0$, $1/\sigmin(\SigF) \to \infty$, but $x^\top \E[x x^\top]^{-1} x \le 2$. 
\end{example}

\subsection{Intuition for $L_\infty$ Outcome Coverage} \label{app:linf_intuition} 

Here we provide more details about the $L_\infty$ outcome coverage in Section~\ref{sec:linf_outcome_coverage}, which are omitted in the main text due to space limit. 


\paragraph{Looseness of $L_2$ outcome coverage} We start by examining the looseness of $L_2$ outcome coverage (Assumption~\ref{asm:policy_value}), which will motivate the $L_\infty$ version of outcome coverage. 
To develop intuitions, we first examine the boundedness of Eq.\eqref{eq:construct_x} (the construction that leads to $L_2$ outcome coverage) in the setting of \cref{ex:SigF_I}. In this example, $\SigF = \mathbf{I}$, and $\|\Vf\|_\infty \le H$. According to the $L_2$ H\"older decomposition in \cref{prop:fdvf}, however, 
$$
\|\Vf\|_\infty \le \max_{f_h} \|\bfu(f_h)/Z(f_h)\|_2 \|\Vs\|_2 \le H \sqrt{S}, 
$$
where $\|\cdot\|_{\SigF^{-1}}$ is replaced by $\|\cdot\|_2$ since $\SigF=I$. In contrast, the tight analysis is 
$$
\|\Vf\|_\infty  \le \max_{f_h} \|\bfu(f_h)/Z(f_h)\|_1 \|\Vs\|_\infty \le H.
$$
The comparison clearly highlights that the looseness in $S$ comes from the fact that $L_2$ H\"older does not fully leverage the fact that $\|\bfu(f_h)/Z(f_h)\|_1 = 1$ and loosely relaxing it to $\|\bfu(f_h)/Z(f_h)\|_2 \le 1$. In contrast, our $L_\infty$ coverage assumption allows for a more natural $L_1/L_\infty$ H\"older to leverage the $L_1$ normalization of $\bfu(f_h)/Z(f_h)$.

\paragraph{$L_\infty$ Outcome Coverage} Similar to $L_\infty$ belief coverage, we could define $L_\infty$ outcome coverage simply as  $\|(\SigF)^{-1} \Vs\|_\infty$. The small caveat and inelegance is that it does not recover $\Vf = R^+$ when $\pib = \pie$. To address this, we leverage the lesson from belief coverage (Section~\ref{sec:linf_belief_coverage}): to obtain $\CHi=1$ when $\pib=\pie$, a key property is that $\SigH$ (data covariance matrix) and $\bfb_h^{\pib}$ (the vector to be covered) are the covariance matrix and the mean vector w.r.t.~the same distribution of belief vectors. In contrast, for the outcome coverage case, $\Vs$ depends on the reward function, but such information is missing in $\SigF$, which prevents a perfect cancellation in the on-policy case. 

Therefore, we can adjust the definition of $\SigF$ to mimic the situation of belief coverage. The first step is to find the counterpart of belief state which is $L_1$-normalized. This obviously corresponds to $Z_h^{-1} \MF^\top$, whose rows sum up to $1$. Let $\bbu(f_h) := \bfu(f_h)/Z(f_h) \in \Delta(\Scal_h)$, then $\SigF = S \cdot \EE_{Z_h/S}[\bbu \bbu^\top]$. Then, by incorporating reward information into $\SigF$, we arrive at the solution in Eq.\eqref{eq:Vf-sol}. 


Finally, we examine the setting of Example~\ref{ex:SigF_I} and show that $\Vf$ in Eq.\eqref{eq:Vf-sol} is similarly well-behaved as Eq.\eqref{eq:construct_x} in this scenario.

\begin{example} \label{ex:value-ratio}
In the same setting as Example~\ref{ex:SigF_I}, i.e., $f_h$ always reveals $s_h$, we have $\SigFR = \textrm{diag}(V_{\Scal_h}^{\pib})$, and $\|(\SigFR)^{-1} \Vs\|_\infty = \max_{s_h} \Vs(s_h) / V_{\Scal}^{\pib}(s_h)$. 
\end{example}
This example also closely resembles Example~\ref{ex:history_one_hot} for belief coverage. 
While $(\SigFR)^{-1}$ may behave poorly if $V_{\Scal}^{\pib}(s_h)$ is small, an easy fix is to simply add a constant to the reward function and shift its range to e.g., [1, 2], which ensures a small $\CFi$.

\subsection{Extension to History-dependent Policies} \label{app:fsm_policy}

The main text assumes memoryless $\pib$ and $\pie$. Similar to \citet{uehara2022future}, we can extend the approach to handle history-dependent policies (the changes to the proof are sketched below). Instead of rewriting the proof with these changes, we will provide a ``black-box'' reduction that handles the extension more elegantly and allow for more general results than \citet{uehara2022future}.

\paragraph{Memory-based Policies and Changes in the Proofs}
Define \textit{memory} $m_h$ as a  function of $\tau_h$ (i.e., $m_h = m_h(\tau_h)$), and a memory-based policy can depend on $(m_h, o_h)$.  \citet{uehara2022future} allows for $m_h = (o_{h-M:h-1}, a_{h-M:h-1})$ for some fixed window $M$ in their analyses. If we let $m_h = (o_{1:h-1}, a_{1:h-1})$, we recover fully general history-dependent policies.

If we direct modify the proofs to accommodate this generalization (as done in \citet{uehara2022future}), the required changes are:
\begin{itemize}[leftmargin=*]
\item $\Vs$ and $\Vf$ need to additionally depend on $m_h$ to be well defined. $V\in\Vcal$ also generally depend on $m_h$ since we need realizability $\Vf \in \Vcal$. 
\item $\MF$ and $\MH$ have rows indexed by $(s_h, m_h)$ instead of just $s_h$. That is, we replace belief state with posterior over $(s_h, m_h)$. Similarly, entries in $\MF$ are now future probabilities conditioned on $(s_h, m_h)$
\end{itemize}
However, if we consider latent state coverage, which is weaker than belief coverage that is really needed (\cref{app:belief_vs_latent}), we now require bounded $\frac{\de(s_h, m_h)}{\db(s_h, m_h)}$. This is generally exponential in the length of $m_h$ (e.g., if   $M=h-1$, the ratio is exactly the cumulative importance weight), preventing us from handling $\pib$ and $\pie$ with long-range history dependencies.

\paragraph{Reduction to Memoryless Case} 
Instead of changing the proofs, we now describe an alternative approach that (1) produces results similar to \citet{uehara2022future}, and (2) handles more general recurrent policies that are \textit{finite-state-machines} (FSMs), which subsume fully history-dependent policies (or policies that depend on a fixed-length window) as special cases. 

Concretely, a policy with memory $m_h$ is said to be an FSM, if $m_{h+1}$ can be computed solely based on $m_{h}, o_{h}, a_{h}$, without using other information in $\tau_{h}$. $m_h = (o_{h-M:h-1}, a_{h-M:h-1})$ satisfies this definition, as computing $m_{h+1}$ is simply dropping the oldest observation-action pair from $m_h$ and appending the newest one.\footnote{When $h\le M$ no dropping is needed. In fact, if we never drop, then $m_h = \tau_h$ is just the history.} Another example is belief update, where $\bfb(\tau_{h+1})$ can be computed from $\bfb(\tau_h), o_h, a_h$. 

We assume that both $\pib$ and $\pie$ have the same memory; if they differ, we can simply concatenate their memories together. Then, handling memories in our analyses takes two steps:
\begin{enumerate}[leftmargin=*]
\item We allow latent state transition to   depend on $o_h$, that is, $s_{h+1} \sim \mathbb{T}(\cdot|s_h, a_h, o_h)$. This model has been considered by \citet{jiang2017contextual} to unify POMDPs and low-rank MDPs. \textbf{Our analyses hold as-is without any changes.} To provide some intuition: the key property that enables the analyses of FDVF is that $s_h$ is a bottleneck that separates histories from futures, which enables $(\BEH V)(\tau_h) 
= \langle \bfb(\tau_h), \BES_h V \rangle$ in Eq.\eqref{eq:BES-BEH}. This property is intact with the additional dependence of $s_{h}$ on $o_{h-1}$.
\item We   provide a \textbf{blackbox reduction} from the memory-based case to the memoryless case. Define a new POMDP that is equivalent to the original one,  where the latent state is $\tilde s_h = (s_h, m_h)$. The observation is $\tilde o_h = (o_h, m_h)$, which can be emitted from $\tilde s_h$ since $\tilde s_h$ contains $m_h$. The latent state transition is $\tilde s_{h+1} = (s_{h+1}, m_{h+1})$, where $s_{h+1} \sim \mathbb{T}(\cdot| s_h, a_h, o_h)$, and $m_{h+1}$ is updated from $m_h$ (contained in $\tilde s_h$), $a_h$, and $o_h$; this is  why we need $o_h$ to participate in latent transitions. Now it suffices to perform OPE in the new POMDP. Data from the original POMDP can be converted to that of the new POMDP with $\tilde o_h = (o_h, m_h)$. $\pib$ and $\pie$ only need to depend on $\tilde o_h$ and become memoryless.
\end{enumerate}
Given the reduction, if we design function classes that operate on the histories and futures of the new POMDP, the guarantees in the main text immediately hold. The final step is to translate the objects (e.g., futures and histories) and guarantees in the new POMDP back to the original POMDP for interpretability. Note that the history in the new POMDP is $\tilde\tau_h = (\tilde o_{1:h-1}, a_{1:h-1})$, which contains the same information as $\tau_h$, so functions in $\Xi$ and $\Wcal$ can still operate on $\tau_h$. Similarly, $V\in\Vcal$ now takes $(m_h, f_h)$ as input, which is consistent with \citet{uehara2022future}. 

When translating the assumptions back to the original POMDP, we can see that now the results can be well-behaved when $m_h$ is ``simple''. For example, if $m_h$ takes values from a constant-sized space, $d^{\pie}(s_h,m_h) / d^{\pib}(s_h,m_h)$ (which lower-bounds belief coverage as discussed above) may not  blow up exponentially, even though $m_h$ can hold information that is arbitrarily old (e.g., it remembers one bit of information from $h=1$). This is a scenario that cannot be handled by the formulation of \citet{uehara2022future}. 

However, the guarantee can still deteriorate when the policies maintain rich memories, even if these memories are highly structured, such as $m_h = \bfb(\tau_h)$. Under the mild assumption that all histories lead to distinct belief states $\bfb(\tau_h)$ (they can be very close in $\RR^{S}$ and just need to be not exactly identical), the belief state in the new POMDP, $\tilde \bfb(\tau_h)$, completely ignores the linear structure of $m_h$ and treats it in the same way as $m_h = \tau_h$,\footnote{$\widetilde \bfb(\tau_h)$ has a ``block one-hot'' structure, where the block of size $S$ indexed by $m_h(\tau_h)$ is equal to $\bfb(\tau_h)$, and all other entries are $0$.} leading to an exponentially large belief coverage. How to handle policies that depend on rich but highly structured memories such as belief states is a major open problem.  

\subsection{Recovering MDP Algorithms and Analyses} \label{app:recover_mdp}
One somewhat undesirable property of our algorithms and analyses is that they do not subsume MDP algorithms/analyses as a special case. MDPs can be viewed as POMDPs with identity emission, i.e., $o_h = s_h$. In this case, algorithms considered in this paper are analogous to their MDP counterparts \cite{uehara2019minimax}, except that the MDP algorithms require that all functions $V$, $w$, $\xi$ to operate on the current state $s_h$. 
In contrast, our FDVF operates on $f_h = (o_h, a_h, \ldots, o_H, a_H)$ which includes $o_h$($=s_h$). To recover the MDP algorithm as a special case, we can choose $\Vcal$ such that every $V\in\Vcal$ only depends on $f_h$ through $o_h$. However, $w$ and $\xi$ operate on $\tau_h = (o_1, a_1, \ldots, o_{h-1}, a_{h-1})$ which does not contain $o_h$. This makes subsuming the MDP case difficult.

Here we describe briefly how to overcome this issue by slightly modifying our analysis; the changes are somewhat similar to Appendix~\ref{app:fsm_policy}. Instead of letting $\tau_h = (o_1, a_1, \ldots, o_{h-1}, a_{h-1})$, we can define an alternative notion of history $\tilde \tau_h = (o_1, a_1, \ldots, o_h)$ and replace $\tau_h$ in the main text with $\tilde \tau_h$. Algorithmically, we can immediately recover MDP algorithms by also restricting functions in $\Wcal$ and $\Xcal$ to only operate on $o_h$. However, our analyses (which apply to general POMDPs) need to change accordingly, as replacing $\tau_h$ with $\tilde \tau_h$ will break the key properties, such as $\BEH V$ being linear in $\BES V$. The problem is that in the definitions of $\Vs$ and $\BES V$ we want to marginalize out the randomness of $o_h$ given $s_h$, which is in conflict with conditioning on $\tilde \tau_h$ that includes all the information of $o_h$. 
To address this, we simply replace $s_h$ with $\tilde s_h := (s_h, o_h)$ in the definition of $\Vs$, $\BES V$, $\MH$, and $\MF$. That is, $\Vs$ is now a function of $\tilde s_h$ and also depends on $o_h$, $\bfb(\tau_h)$ is the posterior distribution over $\tilde s_h$, and the entries of the outcome matrix $\MF$ is   $\Pr_{\pib}(f_h|s_h, o_h)$. This retains the key property that $\BEH V$ is linear in $\BES V$ with $\tilde \bfb(\tau_h)$ as the coefficient. 

When we specialize the guarantees of this modified analysis to the MDP setting, outcome coverage is always satisfied and we can always use the MDP's standard value function as $\Vf$. For belief coverage, note that $(\BEH V)(\tau_h) = (\BES V)(s_h, o_h)$, which is simply the standard definition of Bellman error in MDPs (since $s_h=o_h$), which we denote as $(\Bcal V)(s_h)$. Plugging this into the refined coverage discussed in \cref{app:refined}, we  recover the standard definition of coverage in the MDP setting, namely $\sup_{V\in\Vcal}\frac{|\E_{\pie}[(\BES V)(s_h)]|}{\sqrt{\E_{\pib}[(\BES V)(s_h)^2]}}$.

\subsection{Incorporating Different Latent-State Priors in $\Vf$} \label{app:alt_prior}
In \cref{sec:boundness} we showed that $\SigF$, which is crucial to the construction in Eq.\eqref{eq:construct_x}, can be viewed as the confusion matrix of making posterior predictions of $s_h$ from $f_h$, using a uniform prior. The uniform prior corresponds to the all-one vector $\bone_{\Scal}$ in the definition of $Z_h:=\diag(\bone_{\Scal}^\top M_{\Fcal,h})$, which naturally leads to the question of whether we can incorporate a different and perhaps more informative prior. 

Let $\mathbf{p}_h \in \Delta(\Scal_h)$ be the prior we would like to use instead of the uniform prior. An immediate idea is to define $Z_h^{\mathbf{p}_h} = \diag(\mathbf{p}_h^\top M_{\Fcal,h})$ (possibly up to a $S$ scaling factor, as the uniform prior is $\mathbf{p}_{\textrm{unif}} = [1/S, \cdots, 1/S]^\top$ and $\bone_{\Scal} = S \mathbf{p}_{\textrm{unif}}$) and directly plug it into the construction in Eq.\eqref{eq:construct_x}. However, this breaks some of the key properties of the current construction of $\SigF$, such as $L_1$ normalization of rows of $Z_h^{-1} \MF^\top$. 

To resolve this, the key is to realize that the rows of $Z_h^{-1} \MF^\top$ are $L_1$ normalized because they can be viewed as posteriors over $\Scal_h$. In comparison, if we examine the $(f_h, s_h)$-th entry of $(Z_h^{\mathbf{p}_h})^{-1} \MF^\top$, it is
$$
\frac{\Pr_{\pib}[f_h|s_h]}{\sum_{s' \in \Scal_h} \Pr_{\pib}[f_h|s'] \mathbf{p}_h(s')},
$$
so clearly it is missing a $\mathbf{p}_h(s_h)$ term on the numerator to be a proper posterior. Inspired by this observation, we can see how to fix the construction now: recall that $\Vf$ needs to satisfy $\MF \Vfh = \Vsh$. If $\mathbf{p}_h > 0$, then this is equivalent to 
$$
(\diag(\mathbf{p}_h) \MF) \Vfh = \diag(\mathbf{p}_h) \Vsh.
$$
Then, the minimum $Z_h^{\mathbf{p}_h}$-weighted solution of this equation will provide the desired construction that preserves the $L_1$ normalization properties.

\paragraph{Which prior $\mathbf{p}_h$ to use?} For the initial time step $h=1$, the initial latent-state distribution $d_1$ is the most natural candidate for the prior. If $d_1(s_1) = 0$ for some $s_1 \in \Scal_1$, incorporating $d_1$ as the prior will essentially treat $s_1$ as non-existent and ignore the outcome coverage of $\pib$ over $\pie$ from $s_1$, which is reasonable because $s_1$ will not be activated by neither $\pie$ and $\pib$.  For $h>1$, the answer is less clear, and natural candidates include $\mathbf{p}_h = \db_h$ and $\de_h$, or perhaps their probability mixture. For the example scenarios examined in the main text, these choices (or even a uniform prior) do not make significant differences, and we leave the investigation of which prior is the best to future work. 

\section{Proofs for \cref{sec:fdvf}}
\subsection{Proof of \cref{lem:evaluation_error}} \label{app:evaluation_error}
The RHS of the lemma statement is
\begin{align*}
\sum_{h=1}^H \E_{\pie}\bra{(\BES V)(s_{h})} = \sum_{h=1}^H \E_{s_h\sim \de_h} \left[ \E_{\substack{a_h \sim \pie\\a_{h+1:H} \sim \pib}}[r_h+V(f_{h+1}) \mid s_h]-\E_{\pib}[V(f_h) \mid s_h] \right].
\end{align*}
Now notice that the expected value of the $V(f_{h+1})$ term for $h$ is the same as that  of the $V(f_h)$ term for $h+1$, since both can be written as $\E_{s_{h+1} \sim \de_{h+1}}[\E_{\pib}[V(f_{h+1}) |s_{h+1}]]$. After telescoping cancellations, what remains on the RHS is 
\begin{align*}
\sum_{h=1}^H \E_{s_h\sim \de_h}\left[\E_{\substack{a_h \sim \pie\\a_{h+1:H} \sim \pib}}[r_h \mid s_h]\right] - \E_{s_1}[\E_{\pib}[V(f_1) | s_1]] = J(\pie) - \E_{\pib}[V(f_1)]. \tag*{\qed}
\end{align*}

\subsection{Proof of \cref{thm:finite_1}} 
\label{app:uehara-et-al}
The proof follows similar idea from \citet{uehara2022future}. 
Let $\Gcal_h V=\mu(o_h,a_h)(r_h+V(f_{h+1}))-V(f_h)$, we have
\begin{align*}
\Vhat = \argmin_{V \in \Vcal}\max_{\xi \in \Xi}\sum_{h=1}^H \E_{\Dcal}\bra{(\Gcal_h V)^2-(\Gcal_h V-\xi(\tau_h))^2}.
\end{align*}
For any fixed $V$, we define
\begin{align*}
\xihat_V=\argmax_{\xi \in \Xi} -\sum_{h=1}^H \E_{\Dcal}\bra{\pa{\Gcal_h V-\xi(\tau_h)}^2}.
\end{align*}
\paragraph{Analysis of Inner Maximizer.} We observe that for any $h \in [H]$,
\begin{align*}
\E_{\pi_b}\bra{(\xi(\tau_h)-\Gcal_h V)^2-((\BEH V)(\tau_h)-\Gcal_h V)^2}=\E_{\pi_b}\bra{(\xi(\tau_h)-(\BEH V)(\tau_h))^2}.
\end{align*}
Let $X_h=(\xi(\tau_h)-\Gcal_h V)^2-((\BEH V)(\tau_h)-\Gcal_h V)^2$ and $X=\sum_{h=1}^H X_h$. Let $\bar C=\max\{1+C_{\Vcal},C_{\Xi}\}$, since $C_{\mu} \ge 1$, we have $\left|\xi(\tau_h)\right| \le \bar C$, $\left|\Gcal_h V\right| \le 3 C_{\mu}\bar C$ and $\left|\BEH V(\tau_h)\right| \le 3 \bar C$. Therefore, we have $|X_h| \le 40 C_{\mu} \bar C^2$ and $|X| \le 40 H C_{\mu}\bar C^2$. We observe that
\begin{align*}
&~\E_{\pi_b}[X_h^2] \\
\le&~\E_{\pi_b}\bra{((\BEH V)(\tau_h)-\xi(\tau_h))^2\pa{(\BEH V)(\tau_h)+\xi(\tau_h)-2 \Gcal_h V}^2} \\
\le&~ \E_{\pi_b}\bra{\pa{(\BEH V)(\tau_h)-\xi(\tau_h)}^2\pa{30 \bar C^2+12 (\Gcal_h V)^2 }} \\
\le&~ \E_{\pi_b}\bra{\pa{(\BEH V)(\tau_h)-\xi(\tau_h)}^2\pa{54 \bar C^2+96 \bar C^2 \mu(o_h,a_h)^2 }} \\
\le&~150 \bar C^2 C_{\mu}\E_{\pi_b}\bra{\pa{(\BEH V)(\tau_h)-\xi(\tau_h)}^2}.
\end{align*}
Hence, for the variance of $X$, we have
\begin{align*}
\Var_{\pi_b}[X] &\le H \sum_{h=1}^H \E_{\pi_b}\bra{X_h^2} \\
&\le 150 H \bar C^2 C_{\mu} \sum_{h=1}^H \E_{\pi_b}\bra{\pa{(\BEH V)(\tau_h)-\xi(\tau_h)}^2}.
\end{align*}
From Bernstein's inequality, with probability at least $1-\delta/2$, $\forall V \in \Vcal$, $\forall \xi \in \Xi$,  we have
\begin{align}\label{eq:f_uniform}
    &~\left|\sum_{h=1}^H \{\E_{\Dcal}-\E_{\pi_b}\}[(\xi(\tau_h)-\Gcal_h V)^2-((\BEH V)(\tau_h)-\Gcal_h V)^2]\right| \nonumber \\
    \leq &~ \sqrt{\frac{150 H \bar C^2 C_{\mu} \sum_{h=1}^H \E_{\pi_b}[((\BEH V)(\tau_h)-\xi(\tau_h))^2] \log(4|\Vcal||\Xi|/\delta)}{n}}+\frac{40H \bar C^2 C_{\mu}\log(4|\Vcal||\Xi|/\delta)}{n}.
\end{align}
From Bellman completeness assumption $\BEH \Vcal\subset \Xi$, we also have
\begin{align}\label{eq:optimal_xi}
    \sum_{h=1}^H \E_{\Dcal}[(\xihat_V-\Gcal_h V)^2-((\BEH V)(\tau_h) - \Gcal_h V)^2]\leq 0. 
\end{align}
Therefore, we obtain that
\begin{align*}
&~\sum_{h=1}^H \E_{\pi_b}[((\BEH V)(\tau_h)-\hatxi_V(\tau_h))^2 ] \\
=&~\sum_{h=1}^H \E_{\pi_b}[(\hatxi_V(\tau_h) - \Gcal_h V)^2-((\BEH V)(\tau_h)-\Gcal_h V)^2] \\
\leq&~ \left|\sum_{h=1}^H \{\E_{\Dcal}-\E_{\pi_b}\}[(\hatxi_V(\tau_h) - \Gcal_h V)^2-((\BEH V)(\tau_h)-\Gcal_h V)^2]\right|+\sum_{h=1}^H \E_{\Dcal}[(\xihat_V(\tau_h)-\Gcal_h V)^2-((\BEH V)(\tau_h) - \Gcal_h V)^2] \\
\leq&~ \left|\sum_{h=1}^H \{\E_{\Dcal}-\E_{\pi_b}\}[(\hatxi_V(\tau_h) - \Gcal_h V)^2-((\BEH V)(\tau_h)-\Gcal_h V)^2]\right| \\
\leq&~ \sqrt{\frac{150 H \bar C^2 C_{\mu} \sum_{h=1}^H \E_{\pi_b}[((\BEH V)(\tau_h)-\hatxi_V(\tau_h))^2] \log(4|\Vcal||\Xi|/\delta)}{n}}+\frac{40 H \bar C^2 C_{\mu} \log(4|\Vcal||\Xi|/\delta)}{n}.
\end{align*}
The second inequality is from \cref{eq:optimal_xi} and the last inequality is from \cref{eq:f_uniform}. We then have
\begin{align}\label{eq:concen_1}
\sum_{h=1}^H \E_{\pi_b}[((\BEH V)(\tau_h)-\hatxi_V(\tau_h))^2 ] \le \epsstat, \quad \epsstat:=\frac{225 H \bar C^2 C_{\mu}\log(4|\Vcal||\Xi|/\delta)}{n}.
\end{align}
Combining \cref{eq:f_uniform} and \cref{eq:concen_1}, we have
\begin{align}\label{eq:eq2}
\left|\sum_{h=1}^H \{\E_{\Dcal}-\E_{\pi_b}\}[(\xihat_V(\tau_h)-\Gcal_h V)^2-((\BEH V)(\tau_h)-\Gcal_h V)^2]\right| \leq  \epsstat.
\end{align}
Hence,
\begin{align*}
&~\left|\sum_{h=1}^H \E_{\Dcal}[(\hatxi_V(\tau_h) - \Gcal_h V)^2]- \sum_{h=1}^H \E_{\Dcal}[((\BEH V)(\tau_h) - \Gcal_h V)^2]\right| \\
\le&~ \left|\sum_{h=1}^H \E_{\pi_b}[(\hatxi_V(\tau_h) - \Gcal_h V)^2]- \sum_{h=1}^H \E_{\pi_b}[((\BEH V)(\tau_h) - \Gcal_h V)^2]\right| + 2\epsstat \\
=&~ \left|\sum_{h=1}^H \E_{\pi_b}[((\BEH V)(\tau_h) - \hatxi_V(\tau_h))^2]\right| + 2\epsstat \le 3 \epsstat.
\end{align*}
The first inequality is from \cref{eq:eq2} and the last step is from \cref{eq:concen_1}.
\paragraph{Analysis of Outer Minimizer.} For any future-dependent value function $\Vf$, from optimality of $\Vhat$ and the convergence of inner maximizer, we have
\begin{align}
    \sum_{h=1}^H \E_{\Dcal}[(\Gcal_h \Vhat)^2-(\Gcal_h \Vhat-((\BEH \Vhat))(\tau_h))^2 ]&\leq  \sum_{h=1}^H \E_{\Dcal}\bra{(\Gcal_h \Vhat)^2-(\Gcal_h \Vhat- \xihat_{\Vhat})^2 }+3 \epsstat. \nonumber \\
    &\leq \sum_{h=1}^H  \E_{\Dcal}\bra{(\Gcal_h \Vf)^2-(\Gcal_h \Vf- \xihat_{\Vf})^2 }+3\epsstat \nonumber \\
    &\leq \sum_{h=1}^H \E_{\Dcal}\bra{(\Gcal_h \Vf)^2-(\Gcal_h \Vf- (\BEH \Vf)(\tau_h))^2 }+6\epsstat \nonumber \\
    &=6\epsstat. \label{eq:future_hatbv}
\end{align}
The last step is from that $(\BEH \Vf)(\tau_h)=0$. For any $\tau_h$, we observe that $\forall V \in \Vcal$ and $h \in [H]$,
\begin{align*}
&~\E_{\pi_b}[(\Gcal_h V)^2-(\Gcal_h V-(\BEH V)(\tau_h))^2] \\
=&~\E_{\pi_b}[-(\BEH V)(\tau_h)^2+2 (\BEH V)(\tau_h) \Gcal_h V] \\
=&~\E_{\pi_b}[(\BEH V)(\tau_h)^2].
\end{align*}
For any fixed $V \in \Vcal$, let 
$Y_h=(\Gcal_h V)^2-(\Gcal_h V-(\BEH V)(\tau_h))^2$ and $Y=\sum_{h=1}^H Y_h$, we have $|Y_h| \le 27 \bar C^2 C_{\mu}$ and $|Y| \le 27 H \bar C^2 C_{\mu}$. We observe that
\begin{align*}
\E_{\pi_b}[Y_h^2]&=\E_{\pi_b}[(2 \Gcal_h V-(\BEH V)(\tau_h))^2 (\BEH V)(\tau_h)^2] \\
&\le \E_{\pi_b}[(18 \bar C^2+4 (\Gcal_h V)^2) (\BEH V)(\tau_h)^2] \\
&\le \E_{\pi_b}[(26 \bar C^2+32 \bar C^2 \mu(o_h,a_h)^2) (\BEH V)(\tau_h)^2] \\
&\le 58 \bar C^2 C_{\mu} \E_{\pi_b}[(\BEH V)(\tau_h)^2].
\end{align*}
Then, for the variance of $Y$, we have
\begin{align*}
\Var_{\pi_b}[Y] &\le \E_{\pi_b}[Y^2] \\
&\le H\sum_{h=1}^H \E_{\pi_b}[Y_h^2] \\
&= 58H \bar C^2 C_{\mu} \sum_{h=1}^H \E_{\pi_b}[(\BEH V)(\tau_h)^2].
\end{align*}

From Bernstein's inequality, with probability at least $1-\delta/2$, $\forall V \in \Vcal$, we have
\begin{align}
    &\left|\sum_{h=1}^H (\E_{\Dcal}-\E_{\pi_b})\bra{(\Gcal_h V)^2-(\Gcal_h V-(\BEH V)(\tau_h))^2} \right|\nonumber\\
     &\leq \sqrt{58H \bar C^2 C_{\mu} \sum_{h=1}^H \E_{\pi_b}[(\BEH V)(\tau_h)^2] \frac{ \log(4|\Vcal|/\delta)}{n} }+\frac{27 H \bar C^2 C_{\mu}\log(4|\Vcal|/\delta)}{n}. \label{eq:future_zcal}
\end{align}
Therefore, we have
\begin{align*}
    \sum_{h=1}^H \E_{\pi_b}\bra{(\BEH \Vhat)(\tau_h)^2}&= \sum_{h=1}^H \E_{\pi_b}[(\Gcal_h \Vhat)^2-(\Gcal_h \Vhat-(\BEH \Vhat)(\tau_h))^2 ]\\
    &\leq \left|\sum_{h=1}^H (\E_{\Dcal}-\E_{\pi_b})[(\Gcal_h \Vhat)^2-(\Gcal_h \Vhat- (\BEH \Vhat)(\tau_h))^2] \right| \\ 
    & + \sum_{h=1}^H \E_{\Dcal}\bra{(\Gcal_h \Vhat)^2-(\Gcal_h \Vhat-(\BEH \Vhat)(\tau_h))^2}\\
    &\leq  \sqrt{58 H \bar C^2 C_{\mu} \sum_{h=1}^H \E_{\pi_b}\bra{(\BEH \Vhat)(\tau_h)^2} \frac{ \log(4|\Vcal|/\delta)}{n} }+\frac{27 H \bar C^2 C_{\mu}\log(4|\Vcal|/\delta)}{n}+6 \epsstat.  \tag{From \cref{eq:future_hatbv} and \cref{eq:future_zcal}}
\end{align*}
Solving it and we get
\begin{align*}
\sum_{h=1}^H \E_{\pi_b}\bra{(\BEH \Vhat)(\tau_h)^2} \le 10 \epsstat.
\end{align*}
We then invoke \cref{lem:evaluation_error} and obtain that 
\begin{align*}
\left|J(\pi_e)-\E_{\pi_b}[\Vhat(f_1)]\right| &\le \left|\sum_{h=1}^H \E_{s_h \sim d_{\pi_e}^h}\bra{(\BES \Vhat)(s_{h})}\right| \\
&\le \sqrt{H \sum_{h=1}^H \pa{\E_{s_h \sim d_{\pi_e}^h}\bra{(\BES \Vhat)(s_{h})}}^2} \\
&\le \sqrt{H \sum_{h=1}^H \E_{s_h \sim d_{\pi_e}^h}\bra{(\BES \Vhat)(s_h)^2}} \\
&\le \sqrt{H \sum_{h=1}^H \E_{\pi_b}\bra{(\BEH \Vhat)(\tau_h)^2}} \cdot\sqrt{ \frac{\sum_{h=1}^H \E_{s_h \sim d_{\pi_e}^h}\bra{(\BES \Vhat)(s_h)^2}}{\sum_{h=1}^H \E_{\pi_b}\bra{(\BEH \Vhat)(\tau_h)^2}}} \\
&\le \sqrt{10 H \epsstat} \cdot \sqrt{\frac{\sum_{h=1}^H \E_{\pi_e}\bra{(\BES \Vhat)(s_h)^2}}{\sum_{h=1}^H \E_{\pi_b}\bra{(\BES \Vhat)(s_h)^2}}} \cdot \sqrt{\frac{\sum_{h=1}^H \E_{\pi_b}\bra{(\BES \Vhat)(s_h)^2}}{\sum_{h=1}^H \E_{\pi_b}\bra{(\BEH \Vhat)(\tau_h)^2}}} \\
&\le 50 H \max\{C_{\Vcal}+1,C_{\Xi}\} \mathrm{IV}(\Vcal) \mathrm{Dr}_{\Vcal}[d^{\pi_e},d^{\pi_b}] \sqrt{\frac{C_{\mu}\log \frac{4|\Vcal||\Xi|}{\delta}}{n}}. 
\end{align*}
The proof is completed by using Hoeffding's inequality to bound $|\E_{\Dcal}[\Vhat(f_1)]-\E_{\pi_b}[\Vhat(f_1)]|$. \qed

\section{Proofs for \cref{sec:boundness}}
\subsection{\cref{exm:pinv}} \label{app:pinv}
We  provide a brief justification of \cref{exm:pinv}. When $\Pr_{\pib}(f_h|s_h) \le \frac{C_{\textrm{stoch}}}{(OA)^{H-h+1}}$, for any $\bfu(f_h)$, we have
\begin{align*}
\|\bfu(f_h)\|_2 \le \frac{C_{\textrm{stoch}} \sqrt{S}}{(OA)^{H-h+1}}.
\end{align*}
Then, for any $\mathbf{x} \in \mathbb{R}^{S}$ such that $\|\mathbf{x}\|_2=1$, we have
\begin{align*}
\mathbf{x}^\top (\MF \MF^\top) \mathbf{x} \le  \sum_{f_h} \|\mathbf{x}\|_2^2 \|\bfu(f_h)\|_2^2 = \frac{C^2_{\textrm{stoch}} S }{(OA)^{H-h+1}}.
\end{align*}
Therefore, $\sigmin(\MF) \le \sigma_{\max}(\MF) \le C_{\textrm{stoch}}\sqrt{S} / (OA)^{(H-h+1)/2}$.

\subsection{Proof of \cref{pro:property_solution}} \label{app:property_solution}
For each entry in $\SigF$, we can write it as
\begin{align*}
(\SigF)_{ij} = \sum_{k} \frac{\Pr_{\pi_b}(f_h=k \mid s_h=i)\Pr_{\pi_b}(f_h=k \mid s_h=j)}{\sum_{i'} \Pr_{\pi_b}(f_h=k \mid s_h=i')}
\end{align*}
Since the probability is always non-negative, all the entries in $\SigF$ is non-negative. For each row $i$, we have
\begin{align*}
\sum_j (\SigF)_{ij}&= \sum_k \sum_j \frac{\Pr_{\pi_b}(f_h=k \mid s_h=i)\Pr_{\pi_b}(f_h=k \mid s_h=j)}{\sum_{i'} \Pr_{\pi_b}(f_h=k \mid s_h=i')} \\
&= \sum_k \Prr_{\pi_b}(f_h=k \mid s_h=i)=1.
\end{align*}
Similarly, for each column $j$, we also have $\sum_i (\SigF)_{ij}=1$. Therefore, we prove that $\SigF$ is doubly-stochastic and we know for a stochastic matrix, the largest eigenvalue is $1$. \qed

\subsection{\cref{ex:SigF_I}} \label{app:SigF_I}
Due to the block structure of $\MF$, $\SigF$ is clearly diagonal, and the $i$-th diagonal entry is:
\begin{align*}
\sum_{j: \Pr_{\pib}[f_h=j|s_h=i]>0} \frac{\Pr_{\pib}[f_h=j|s_h=i]^2}{\sum_{i'} \Pr_{\pib}[f_h=j|s_h=i']}.
\end{align*}
Due to the revealing property, the denominator is just $\Pr_{\pib}[f_h=j|s_h=i]$, so the expression is just summing up $\Pr_{\pib}[f_h=j|s_h=i]$ over $j$, which is $1$. 

\subsection{Proof of \cref{prop:fdvf}} \label{app:fdvf}
According to \cref{eq:construct_x}, we have for any $f_h$, 
\begin{align*}
\Vf(f_h) &= Z(f_h)^{-1} \bfu(f_h)^\top \Sigma^{-1}_{\Fcal,h} \Vsh \\
&\le \sqrt{ Z(f_h)^{-1} \bfu(f_h)^\top \Sigma_{\Fcal,h}^{-1} Z(f_h)^{-1} \bfu(f_h)} \sqrt{(\Vsh)^\top \Sigma^{-1}_{\Fcal,h} \Vsh} \\
&\le \sqrt{C_{\Fcal,U} C_{\Fcal,V}}.
\end{align*}
The last step is from \cref{asm:policy_value} and \cref{asm:outcome_vector}. Therefore, $\|\Vf\|_{\infty} \le \sqrt{C_{\Fcal,U} C_{\Fcal,V}}=\sqrt{C_{\Fcal,2}}$. Moreover, we observe that
\begin{align*}
\|\Vfh\|^2_{Z_h} &= \sum_{f_h} Z(f_h) \pa{Z(f_h)^{-1} \bfu(f_h)^\top \Sigma_{\Fcal,h}^{-1} \Vs}^2 \\
& = \sum_{f_h} Z(f_h)^{-1} (\Vs)^\top \SigF^{-1} \bfu(f_h) \bfu(f_h)^\top \SigF^{-1} \Vs \\
& = (\Vs)^\top \SigF^{-1} \left( \sum_{f_h} Z(f_h)^{-1} \bfu(f_h) \bfu(f_h)^\top \right) \SigF^{-1} \Vs \\
& = (\Vs)^\top \SigF^{-1} \SigF \SigF^{-1} \Vs \le C_{\Fcal,V}.
\end{align*}
The last step is from \cref{asm:policy_value}. \qed


\subsection{\cref{ex:l2_outcome_on_policy}}
We give the calculation for \cref{ex:l2_outcome_on_policy}. When $\pie =\pib$, $\Vsh = \MF R_h^+$, so
\begin{align*}
&~ \|\Vsh\|_{\SigF^{-1}}^2 = (R_h^+)^\top \MF^\top \SigF^{-1} \MF R_h^+ \\
= &~ (R_h^+)^\top Z_h^{1/2} Z_h^{-1/2} \MF^\top (\MF Z_h^{-1} \MF^\top) \MF Z_h^{-1/2} Z_h^{1/2} R_h^+ \\
\le &~ (R_h^+)^\top Z_h^{1/2} Z_h^{1/2} R_h^+,
\end{align*}
where the last step follows from the fact that $Z_h^{-1/2} \MF^\top (\MF Z_h^{-1} \MF^\top) \MF Z_h^{-1/2}$ is a projection matrix ($P^2=P$) and is dominated by identity in eigenvalues ($P \preceq
I$). Now, recall that $Z_h/S$ is a proper distribution, so
$$
(R_h^+)^\top Z_h R_h^+ = S \cdot \E_{Z_h/S}[(R_h^+)^2] \le SH^2.
$$
\section{Proofs for \cref{sec:weight}} \label{app:weight_proof}

\subsection{Proof of \cref{lem:l2_bound_weight}} \label{app:l2_bound_weight}
We first verify that $w^\star$ in Eq.~\eqref{eq:w-sol} satisfies Eq.~\eqref{eq:def_weight} as follows.
\begin{align*}
\E_{\pi_b}[w^\star(\tau_h) \bfb(\tau_h)]&=\sum_{\tau_h} d_h^{\pi_b}(\tau_h) w^\star(\tau_h) \bfb(\tau_h) \\
&=\sum_{\tau_h} d_h^{\pi_b}(\tau_h) \bfb(\tau_h) \bfb(\tau_h)^\top \Sigma_{\Hcal,h}^{-1} \bfb_h^{\pie} \\
&=\pa{\sum_{\tau_h} d_h^{\pi_b}(\tau_h) \bfb(\tau_h) \bfb(\tau_h)^\top} \Sigma_{\Hcal,h}^{-1} \bfb_h^{\pie} \\
&=\bfb_h^{\pie}.
\end{align*}
We then show that
\begin{align*}
\|w^\star\|_{2,d^{\pi_b}_h}^2 &= \sum_{\tau_h} d^{\pi_b}_h(\tau_h) \left\{\bfb(\tau_h)^\top \Sigma_{\Hcal,h}^{-1} \pa{\bfb_h^{\pie}}\right\}^2 \\
&= \pa{\bfb_h^{\pie}}^\top \SigH^{-1} \left(\sum_{\tau_h} d^{\pi_b}_h(\tau_h) \bfb(\tau_h) \bfb(\tau_h)^\top \right) \Sigma_{\Hcal,h}^{-1} \bfb_h^{\pie} \\
&= \pa{\bfb_h^{\pie}}^\top \SigH^{-1} \SigH \Sigma_{\Hcal,h}^{-1} \bfb_h^{\pie} \\
&= \pa{\bfb_h^{\pie}}^\top \SigH^{-1} \bfb_h^{\pie} \le \CH.
\end{align*}
The last step follows from \cref{asm:belief}. \qed


\subsection{\cref{ex:CH_on_policy}} \label{app:CH_on_policy}
Consider the lemma: given vector $x$ and PD matrix $\Sigma$, if $\Sigma \succeq xx^\top$, then $x^\top \Sigma^{-1} x\le 1$. The calculation in the example directly follows from letting $x = \bfb_h^{\pib}$, $\Sigma = \SigH$, and the condition $\Sigma \succeq xx^\top$ is satisfied due to Jensen's inequality. 

To prove the lemma, note that $\Sigma - xx^\top$ is PSD, so
$$
(\Sigma^{-1} x)^\top (\Sigma - x x^\top) (\Sigma^{-1} x) \ge 0.    
$$
This implies $x^\top \Sigma^{-1} x \ge x^\top \Sigma^{-1} x x^\top \Sigma^{-1} x$, so $x^\top \Sigma^{-1} x \le 1$. 

\subsection{Comparison between belief coverage and latent state coverage} \label{app:belief_vs_latent}
Here we show that belief coverage is stronger than latent state coverage. More concretely, consider a standard measure of latent state coverage, the 2nd moment of state density ratio (c.f.~discussion below Example~\ref{ex:history_one_hot}):
$$
\E_{\pib}[(\de(s_h)/\db(s_h))^2] = (\bfb_h^{\pie})^\top \textrm{diag}(\E_{\pib}[\bfb(\tau_h)])^{-1} \bfb_h^{\pie}.
$$
In comparison, our belief coverage parameter from \cref{asm:belief} is
$$
(\bfb_h^{\pie})^\top \SigH^{-1} \bfb_h^{\pie}  = (\bfb_h^{\pie})^\top \E_{\pib}[\bfb(\tau_h) \bfb(\tau_h)^\top]^{-1} \bfb_h^{\pie}.
$$
To show the former is smaller than the latter, it suffices to show that 
$$
\textrm{diag}(\E_{\pib}[\bfb(\tau_h)])^{-1} \preceq (\E_{\pib}[\bfb(\tau_h) \bfb(\tau_h)^\top])^{-1},
$$
which is implied by $\textrm{diag}(\E_{\pib}[\bfb(\tau_h)]) \succeq \E_{\pib}[\bfb(\tau_h) \bfb(\tau_h)^\top]$. It therefore suffices to show that $\textrm{diag}(\bfb(\tau_h)) \succeq \bfb(\tau_h) \bfb(\tau_h)^\top$ holds in a pointwise manner for all $\tau_h$. To show this, we temporarily let $\bfb = \bfb(\tau_h)$ in the calculation: consider an arbitrary vector $v\in\RR^{S}$, then
$$
v^\top (\textrm{diag}(\bfb) - \bfb \bfb^\top) v = \E_{s \sim \bfb}[v(s)^2] - (\E_{s \sim \bfb}[v(s)])^2 \ge 0. 
$$
The last step is due to Jensen's inequality.

\subsection{Comparison to $\textrm{IV}(\Vcal)$} \label{app:eigen}

We now compare to the $\textrm{IV}(\Vcal)$ and $\textrm{Dr}_{\Vcal}$ terms in Theorem~\ref{thm:finite_1}. Belief coverage is generally stronger than latent state coverage which corresponds to the $\textrm{Dr}_{\Vcal}$ term (see \cref{app:belief_vs_latent}). That said, perhaps surprisingly, the remaining  $\textrm{IV}(\Vcal)$ term \textit{must} scale with $\frac{1}{\sigma_{\min}(\SigH)}$ 
under moderate assumptions. 
\begin{proposition}\label{prop:eigen}
Suppose $\bfv_{\min}$ is the eigenvector corresponding to $\sigmin(\SigH)$, the smallest eigenvalue for some $\SigH$. Then, if $c_0 \bfv_{\min} \in \BES_h \Vcal := \{\BES_h V: V \in\Vcal\}$ for some non-zero $c_0$, $\mathrm{IV}(\Vcal) \ge \sqrt{\frac{\min_{s_h} d^{\pi_b}(s_h)}{\sigma_{\min}(\SigH)}}$.
\end{proposition}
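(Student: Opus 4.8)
The plan is to read off the bound directly from the identity $(\BEH V)(\tau_h) = \langle \bfb(\tau_h), \BES_h V\rangle$ (Eq.~\eqref{eq:BES-BEH}), by evaluating the ratio inside $\mathrm{IV}(\Vcal)$ at a single convenient $V \in \Vcal$. Fix the step $h$ at which the matrix $\SigH$ and its bottom eigenvector $\bfv_{\min}$ live, and, using the hypothesis $c_0 \bfv_{\min} \in \BES_h \Vcal$, pick $V \in \Vcal$ with $\BES_h V = c_0 \bfv_{\min}$.

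For the numerator, I would lower-bound the state-density-weighted norm by the minimum mass: $\E_{\pi_b}[(\BES V)(s_h)^2] = \sum_{s_h} \db(s_h)\,c_0^2\,(\bfv_{\min})_{s_h}^2 \ge c_0^2\big(\min_{s_h}\db(s_h)\big)\|\bfv_{\min}\|_2^2$. For the denominator, substituting $(\BEH V)(\tau_h) = c_0\langle \bfb(\tau_h),\bfv_{\min}\rangle$ gives $\E_{\pi_b}[(\BEH V)(\tau_h)^2] = c_0^2\,\bfv_{\min}^\top\big(\sum_{\tau_h}\db(\tau_h)\bfb(\tau_h)\bfb(\tau_h)^\top\big)\bfv_{\min} = c_0^2\,\bfv_{\min}^\top \SigH\, \bfv_{\min} = c_0^2\,\sigmin(\SigH)\,\|\bfv_{\min}\|_2^2$, the last step being the eigenvalue--eigenvector relation. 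One should note this denominator is strictly positive: Assumption~\ref{asm:invert} gives $\rank(\MH)=S$, hence $\SigH \succ 0$ and $\sigmin(\SigH)>0$, so together with $c_0\neq 0$ the ratio is well defined.

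Dividing, the common factor $c_0^2\|\bfv_{\min}\|_2^2$ cancels and we get $\E_{\pi_b}[(\BES V)(s_h)^2]\big/\E_{\pi_b}[(\BEH V)(\tau_h)^2] \ge \min_{s_h}\db(s_h)/\sigmin(\SigH)$. Since $\mathrm{IV}(\Vcal)$ is a maximum over steps and a supremum over $V\in\Vcal$ of the square root of exactly such ratios, it is in particular at least $\sqrt{\min_{s_h}\db(s_h)/\sigmin(\SigH)}$, which is the claim. There is no real obstacle here; the only point needing care is the non-degeneracy of the denominator, which I address via Assumption~\ref{asm:invert}, and the scaling of $\bfv_{\min}$ is immaterial because it cancels.
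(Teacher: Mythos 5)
Your proposal is correct and follows essentially the same route as the paper's proof: pick $V$ with $\BES_h V = c_0\bfv_{\min}$, compute the denominator exactly as $c_0^2\,\bfv_{\min}^\top\SigH\bfv_{\min}$, and lower-bound the numerator by $\min_{s_h}\db(s_h)\cdot\|c_0\bfv_{\min}\|_2^2$. The only (harmless) differences are that you keep $\|\bfv_{\min}\|_2^2$ explicit rather than assuming a unit eigenvector, and you add the remark that Assumption~\ref{asm:invert} guarantees the denominator is nonzero.
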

The proposition states that as long as $\min_{s_h} \db(s_h)$ is bounded away from $0$ (which is benign and helps bound the $\textrm{Dr}_{\Vcal}[\de, \db]$ term) and $\Vcal$ is sufficiently rich that $\BES_h \Vcal$ includes a certain vector in $\RR^S$, then boundedness of $\textrm{IV}(\Vcal)$ \textit{requires} bounded $1/\sigmin(\SigH)$, which is stronger than our belief coverage assumption. This renders the split between  $\textrm{IV}(\Vcal)$ and $\textrm{Dr}_{\Vcal}[\de, \db]$ superficial and perhaps unnecessary. 

\begin{proof}[Proof of Proposition~\ref{prop:eigen}]
Since $c_0 \bfv_{\min} \in \{\BES_h V: V \in\Vcal\}$, there exists $V$ such that $\BES_h V=c_0 \bfv_{\min}$. For such $V$, we observe that 
\begin{align*}
\E_{\pi_b}\bra{\pa{\BEH V}(\tau_h)^2}=\sum_{\tau_h} d^{\pi_b}(\tau_h) \inner{\bfb(\tau_h),c_0 \bfv_{\min}}^2=c_0^2\bfv_{\min}^\top \SigH \bfv_{\min}=\sigma_{\min}(\SigH) c_0^2.
\end{align*}
and
\begin{align*}
\E_{\pi_b}\bra{\pa{\BES V}(s_h)^2} \ge \min_{s_h} d^{\pi_b}(s_h) \cdot \|c_0 \bfv_{\min}\|_2^2= \min_{s_h} d^{\pi_b}(s_h) \cdot c^2_0.
\end{align*}
Therefore
\begin{align*}
\sqrt{\frac{\E_{\pi_b}\bra{\pa{\BES V}(s_h)^2}}{\E_{\pi_b}\bra{\pa{\BEH V}(\tau_h)^2}}} \ge \sqrt{\frac{\min_{s_h} d^{\pi_b}(s_h) \cdot c_0^2}{\sigma_{\min}(\SigH) c_0^2}}=\sqrt{\frac{\min_{s_h} d^{\pi_b}(s_h)}{\sigma_{\min}(\SigH)}}.
\end{align*} 
\end{proof}

\subsection{\cref{ex:history-all-1}} \label{app:history-all-1}
It suffices to show
\begin{align*}
\SigH \mathbf{1} =  \E_{\pib}[\bfb(\tau_h) \bfb(\tau_h)^\top] \mathbf{1}  
=   \E_{\pib}[\bfb(\tau_h) \bfb(\tau_h)^\top \mathbf{1}]  
=  \E_{\pib}[\bfb(\tau_h)] = \bfb_h^{\pib}.
\end{align*}

\subsection{Proof of \cref{lem:belief_L2_vs_Linf}} \label{app:belief_L2_vs_Linf}
The key is to notice that $(\bfb_h^{\pie})^\top$ is non-negative, so 
\begin{align*}
&~ (\bfb_h^{\pie})^\top \SigH^{-1} \bfb_h^{\pie} \le (\bfb_h^{\pie})^\top |\SigH^{-1} (\bfb_h^{\pie})| \tag{$|\cdot|$ is pointwise absolute value} \\
\le &~ (\bfb_h^{\pie})^\top \mathbf{1} \cdot \|\SigH^{-1} (\bfb_h^{\pie})\|_\infty \tag{using the non-negativity of $(\bfb_h^{\pie})^\top$ again} \\
= &~ \|\SigH^{-1} (\bfb_h^{\pie})\|_\infty.
\end{align*}

\subsection{Proof of \cref{lem:vf_bound_inf}} \label{app:vf_bound_inf}
According to Eq.~\eqref{eq:Vf-sol}, we use $L_1/L_\infty$ H\"older's inequality and obtain that for any $f_h$
\begin{align*}
|\Vf(f_h)| &\le \left\|\frac{\bfu(f_h)}{Z^R (f_h)}\right\|_1  \left\|(\SigFR)^{-1} \Vs\right\|_\infty \\
&= R^+(f_h) \left\|\frac{\bfu(f_h)}{Z(f_h)}\right\|_1 \left\|(\SigFR)^{-1} \Vs\right\|_\infty \\
&\le R^+(f_h) \CFi  \\
&\le H \CFi.
\end{align*}
The second inequality is from \cref{asm:outcome-inf} and $\frac{\bfu(f_h)}{Z(f_h)}$ is a stochastic vector. The last step is from the boundedness of the reward.
\qed 

\subsection{\cref{ex:future-all-1}} \label{app:future-all-1}
We use $\textrm{diag}(\cdot)$ both for creating a diagonal matrix with an input vector and for taking the diagonal vector out of a matrix. 
$$
\SigFR \mathbf{1} = \MF (Z_h^R)^{-1} \MF^\top \mathbf{1} = \MF (Z_h^R)^{-1} \textrm{diag}(Z_h)^\top
= \MF R_h^+ = \Vs. 
$$

\subsection{\cref{ex:value-ratio}} \label{app:value-ratio}
The calculation is similar to that of \cref{ex:SigF_I} in Appendix~\ref{app:SigF_I}, except that the numerator has an extra $R^+(f_h)$. Therefore, when calculating the $i$-th diagonal entry of $\SigFR$, the final sum is calculating the expectation of $R^+(f_h)$ conditioned on $s_h = i$ under policy $\pib$, which is the definition of $V_{\Scal}^{\pib}(i)$. 

\subsection{Proof of \cref{thm:finite_2}} \label{app:finite_2}
In the proof of \cref{thm:finite_1}, we have 
\begin{align*}
\sum_{h=1}^H \E_{\pi_b}\bra{(\BEH \Vhat)(\tau_h)^2} \le 10 \epsstat, \quad \epsstat:=\frac{225 H \bar C^2\log(4|\Vcal||\Xi|/\delta)}{n}.
\end{align*}
We observe that
\begin{align*}
\left|\E_{\pi_e}\bra{(\BES \Vhat)(s_h)}\right|&=\left|\E_{\tau_h \sim d^{\pi_e}_h}\E_{s_h \sim \bfb(\tau_h)}\bra{(\BES \Vhat)(s_h)} \right| \\
&=\left|\E_{\pi_e}\bra{(\BEH \Vhat)(\tau_h)}\right| \\
&= \left|\E_{\pi_b}\bra{ w^\star(\tau_h) (\BEH \Vhat)(\tau_h)}\right| \\
&\le \|w^\star\|_{2,d_h^{\pi_b}} \sqrt{\E_{\pi_b}\bra{(\BEH \Vhat)(\tau_h)^2}} \\
&\le \sqrt{\CH \E_{\pi_b}\bra{(\BEH \Vhat)(\tau_h)^2}}
\end{align*}
The third equality is from \cref{def:his}, the first inequality is from Cauchy-Schwarz inequality and the last inequality is from \cref{lem:l2_bound_weight}. Then, we have
\begin{align*}
\left|\sum_{h=1}^H \E_{\pi_e}\bra{(\BES \Vhat)(s_h)}\right| &\le \sqrt{H \sum_{h=1}^H \pa{\E_{\pi_e}\bra{(\BES \Vhat)(s_h)}}^2} \\
&\le \sqrt{10 H \CH  \epsstat} \\
&\le 50 H \bar C \sqrt{\frac{\CH C_{\mu} \log(4|\Vcal||\Xi|/\delta)}{n}} \\
&\le c H^2 (\CFi+1) \sqrt{ \frac{\CH C_{\mu} \log(4|\Vcal||\Xi|/\delta)}{n}}.
\end{align*}
The last step is from \cref{lem:vf_bound_inf}, $C_{\Vcal} \le c \|\Vf\|_\infty$ and $C_{\Xi} \le c (\|\Vf\|_\infty+1)$. The proof is completed after invoking \cref{lem:evaluation_error}.
\qed 

\subsection{Proof of \cref{thm:bound_weight}} \label{app:bound_weight}
Finally, we prove \cref{thm:bound_weight}. The proof uses the similar idea from \citet{xie2020q}. 
For any $V \in \Vcal$ and $w \in \Wcal$, we define the population loss estimator $\Lcal_{d^{\pib}}$ and the empirical loss estimator $\Lcal_{\Dcal}$ as follows
\begin{align*}
\Lcal_{d^{\pib}}(V,w)&:=\sum_{h=1}^H \E_{\pi_b}\bra{w(\tau_h) (\BEH V)(\tau_h)} \\
\Lcal_{\Dcal}(V,w)&:=\sum_{h=1}^H \E_{\Dcal} \left[w(\tau_h) \pa{\mu(o_h,a_h)\pa{r_h+ V(f_{h+1})} -V(f_h)}\right].
\end{align*}

We then invoke \cref{lem:evaluation_error} and obtain that
\begin{align*}
\left|J(\pi_e)-\E_{\pi_b}[\Vhat(f_1)]\right| &= \left|\sum_{h=1}^H \E_{s_h \sim d^{\pi_e}}\bra{(\BES \Vhat)(s_{h})}\right|.
\end{align*}
We observe that 
\begin{align*}
\E_{s_h \sim d^{\pi_e}}\bra{(\BES \Vhat)(s_{h})}&=\E_{\tau_h \sim d^{\pi_e}}\bra{\E_{s_h \sim \bfb(\tau_h)}\bra{(\BES \Vhat)(s_h)}} \\
&=\E_{\tau_h \sim d^{\pi_e}}\bra{(\BEH \Vhat)(\tau_h)} \\
&=\E_{\pi_b}\bra{w^\star(\tau_h) (\BEH \Vhat)(\tau_h)},
\end{align*}
where the last step is from \cref{def:his}. Therefore, our goal is to bound $\left|\sum_{h=1}^H \E_{\pi_b}\bra{w^\star(\tau_h) (\BEH \Vhat)(\tau_h)}\right|=\left|\Lcal_{d^{\pi_b}}(\Vhat,w^\star)\right|$. Let
\begin{align*}
\widehat w \coloneqq & \argmin_{w \in \lspan(\Wcal)}\max_{V \in \Vcal}\left|\sum_{h=1}^H\E_{\pi_b} \left[ \left(w^\star(\tau_h) - w(\tau_h)\right) \cdot (\BEH V)(\tau_h)\right]\right|, \\
\Vtilde \coloneqq & \argmin_{V \in \Vcal}\sup_{w \in \lspan(\Wcal)}\left|\sum_{h=1}^H \E_{\pi_b} \left[ w(\tau_h) \cdot (\BEH V)(\tau_h)\right]\right|.
\end{align*}
Then, we subtract the approximation error of $\widehat w$ from our objective,
\begin{align*}
\left|\sum_{h=1}^H \E_{\pi_b}\left[w^\star(\tau_h)  (\BEH \Vhat) (\tau_h)\right]\right| = &~ \left|\sum_{h=1}^H\E_{\pi_b} \left[ \left(w^\star(\tau_h) - \widehat w(\tau_h) \right) \cdot (\BEH \Vhat) (\tau_h)\right] + \sum_{h=1}^H \E_{\pi_b} \left[ \widehat w(\tau_h) \cdot (\BEH \Vhat) (\tau_h)\right]\right| \\
\leq &~ \left|\sum_{h=1}^H \E_{\pi_b} \left[ \left(w^\star(\tau_h) - \widehat w(\tau_h)\right) \cdot (\BEH \Vhat) (\tau_h) \right]\right|  + \left|\sum_{h=1}^H \E_{\pi_b} \left[ \widehat w(\tau_h) \cdot (\BEH \Vhat) (\tau_h) \right]\right| \\
\le &~ \epsW + \left|\sum_{h=1}^H \E_{\pi_b} \left[ \widehat w(\tau_h) \cdot (\BEH \Vhat) (\tau_h) \right]\right|.
\end{align*}
Next, we consider the approximation error of $\Vtilde$ and obtain that
\begin{align*}
\left|\sum_{h=1}^H \E_{\pi_b} \left[ \widehat w(\tau_h) \cdot (\BEH \Vhat) (\tau_h) \right]\right| 
= \epsV + \left|\sum_{h=1}^H \E_{\pi_b} \left[ \widehat w(\tau_h) \cdot (\BEH \Vhat) (\tau_h) \right]\right|  - \sup_{w \in \lspan(\Wcal)}\left|\sum_{h=1}^H \E_{\pi_b} \left[ w(\tau_h) \cdot (\BEH \Vtilde)(\tau_h) \right]\right|.
\end{align*}
We then connect $\Lcal_{d^{\pib}}(V,w)$ with $\Lcal_{\Dcal}(V,w)$ as follows,
\begin{align*}
&~ \left|\sum_{h=1}^H \E_{\pi_b} \left[ \widehat w(\tau_h) \cdot (\BEH \Vhat) (\tau_h) \right]\right| - \sup_{w \in \lspan(\Wcal)}\left|\sum_{h=1}^H \E_{\pi_b} \left[ w(\tau_h) \cdot (\BEH \Vtilde)(\tau_h) \right]\right| \\
\leq &~ \sup_{w \in \lspan(\Wcal)} \left|\sum_{h=1}^H \E_{\pi_b} \left[ w(\tau_h) \cdot (\BEH \Vhat) (\tau_h) \right]\right| - \sup_{w \in \lspan(\Wcal)}\left|\sum_{h=1}^H \E_{\pi_b} \left[ w(\tau_h) \cdot (\BEH \Vtilde)(\tau_h) \right]\right|  \\
=&~ \max_{w \in \Wcal} \left|\Lcal_{d^{\pib}}(\Vhat,w)\right| - \max_{w \in \Wcal} \left|\Lcal_{d^{\pib}}(\Vtilde,w)\right| \\
=&~ \max_{w \in \Wcal} \left|\Lcal_{d^{\pib}}(\Vhat,w)\right| - \max_{w \in \Wcal} \left| \Lcal_{\Dcal}(\Vhat,w)\right| + \max_{w \in \Wcal} \left|  \Lcal_{\Dcal}(\Vhat,w)\right| - \max_{w \in \Wcal} \left| \Lcal_{d^{\pib}}(\Vtilde,w)\right| \\
\leq&~ \max_{w \in \Wcal} \left|\Lcal_{d^{\pib}}(\Vhat,w)\right| - \max_{w \in \Wcal} \left| \Lcal_{\Dcal}(\Vhat,w)\right| + \max_{w \in \Wcal} \left|  \Lcal_{\Dcal}(\Vtilde,w)\right| - \max_{w \in \Wcal} \left| \Lcal_{d^{\pib}}(\Vtilde,w)\right| \\
\leq&~ \max_{w \in \Wcal} \left|\Lcal_{d^{\pib}}(\Vhat,w) -  \Lcal_{\Dcal}(\Vhat,w)\right| + \max_{w \in \Wcal} \left|\Lcal_{d^{\pib}}(\Vtilde,w) -  \Lcal_{\Dcal}(\Vtilde,w)\right|.
\end{align*}
The first equality is from that $\sup_{w \in \lspan(\Wcal)} |f(\cdot)|=\max_{w \in \Wcal}|f(\cdot)|$ for any linear function $f(\cdot)$. The second inequality is from the optimality of $\Vhat$.

For any fixed $V, w$, let random variable $X= \sum_{h=1}^H w(\tau_h)\pa{\mu(o_h,a_h)(r_h+V(f_h))-V(f_{h+1})}$, $\E_{\pi_b}[X]=\Lcal_{d^{\pi_b}}(V,w)$. Recall that $C_{\Vcal}:= \max_{V\in\Vcal} \|V\|_\infty$ and $C_{\Wcal} := \max_{h} \sup_{w\in\Wcal}  \|w\|_{\infty}$. For $|X|$, we have $|X| \le 2 H C_{\mu} (1+C_{\Vcal})  C_{\Wcal}$. For the variance, we have
\begin{align*}
&~\Var_{\pi_b}[X] \\
\le&~ H \sum_{h=1}^H \E_{\pi_b}\bra{w(\tau_h)^2(\mu(o_h,a_h)(r_h+V(f_{h+1}))-V(f_h))^2} \\
\le&~ H \sum_{h=1}^H \E_{\pi_b}\bra{w(\tau_h)^2\pa{2 C_{\Vcal}^2+2(1+C_{\Vcal})^2\mu(o_h,a_h)^2}} \\
\le&~ 4 H^2 (1+C_{\Vcal})^2 C_{\mu} C^2_{\Wcal}.
\end{align*}
From Bernstein's inequality, with probability at least $1-\delta$, we have
\begin{align*}
\left|\Lcal_{d^{\pib}}(V,w) -  \Lcal_{\Dcal}(V,w)\right| \le 2 H (1+C_{\Vcal})  C_{\Wcal} \sqrt{\frac{C_{\mu} \log \frac{2}{\delta}}{n}}+\frac{2 H (1+C_{\Vcal})C_{\Wcal}C_{\mu} \log \frac{2}{\delta}}{n}.
\end{align*}
Taking the union bound and we obtain
\begin{align*}
&~\max_{w \in \Wcal} \left|\Lcal_{d^{\pib}}(\Vhat,w) -  \Lcal_{\Dcal}(\Vhat,w)\right| + \max_{w \in \Wcal} \left|\Lcal_{d^{\pib}}(\Vtilde,w) -  \Lcal_{\Dcal}(\Vtilde,w)\right| \\
\le &~ 4 H (1+C_{\Vcal})  C_{\Wcal} \sqrt{\frac{C_{\mu} \log\frac{2|\Vcal||\Wcal|}{\delta}}{n}} + \frac{4 H (1+C_{\Vcal})  C_{\Wcal} C_{\mu} \log\frac{2|\Vcal||\Wcal|}{\delta}}{n}.
\end{align*}
According to \cref{lem:vf_bound_inf}, \cref{asm:belief_inf}, $C_{\Vcal} \le c \|\Vf\|_\infty$ and $C_{\Wcal}  \le c \|w^\star\|_{\infty}$, we further have
\begin{align*}
\left|J(\pi_e) - \E_{\pi_b}[\Vhat(f_1)]\right| &\le \epsV + \epsW + cH^2 \CHi (\CFi+1)  \sqrt{\frac{C_{\mu} 
 \log\frac{2|\Vcal||\Wcal|}{\delta}}{n}}+\frac{cH^2 \CHi (\CFi+1) C_{\mu} \log\frac{2|\Vcal||\Wcal|}{\delta}}{n}.
\end{align*}
The proof is completed by using Hoeffding's inequality to bound $|\E_{\Dcal}[\Vhat(f_1)]-\E_{\pi_b}[\Vhat(f_1)]|$. \qed

\end{document}